\newtheorem{definition}{Definition}
\newtheorem{example}{Example}
\newtheorem{proposition}{Proposition}
\newcommand{\xy}[1]{{\color{black}{#1}}} 
\newcommand{\zmw}[1]{{\color{black}{#1}}} 
\newcommand{\zm}[1]{{\color{black}{#1}}}
\newcommand{\alt}[1]{{\color{black}{#1}}}
\journal{JLAMP}
\begin{document}
\sloppy
\begin{frontmatter}

\title{Weighted Automata Extraction and Explanation of
Recurrent Neural Networks for Natural Language Tasks}

\author[address1]{Zeming Wei}
\author[address2]{Xiyue Zhang}
\author[address1]{Yihao Zhang}
\author[address1]{Meng Sun\corref{correspondingauthor}}
\cortext[correspondingauthor]{Corresponding author: sunm@pku.edu.cn}

\address[address1]{School of Mathematical Sciences, Peking University, Beijing, China}
\address[address2]{Department of Computer Science, University of Oxford, Oxford, United Kingdom}

% \author{Zeming Wei}
% \author{Xiyue Zhang}
% \author{Yihao Zhang}
% \author{Meng Sun\footnote{Corresponding author. Email: sunm@pku.edu.cn}}

% \address{School of Mathematical Sciences, Peking University, China}

\begin{abstract}
Recurrent Neural Networks (RNNs) have achieved tremendous success in processing sequential data,
yet understanding and analyzing their behaviours remains a significant challenge.
To this end,  many efforts have been made to extract finite  automata from RNNs, which are more amenable for analysis and explanation.
However, existing approaches like exact learning and compositional approaches for model extraction  have limitations in either scalability or precision.
\alt{In this paper, we propose a 
% \st{general} framework 
novel framework
of Weighted Finite Automata (WFA) extraction and explanation to tackle the limitations 
% for RNNs 
% \alt{customized} 
for 
% \xy{in the context of} 
natural language tasks.
% \xyc{cant be both general and customized??}
% \st{For the extraction part,} 
First, to address the transition sparsity and context loss problems we identified in WFA extraction for natural language tasks,
% Then 
we propose an empirical method to 
complement missing rules in the transition diagram,
and adjust transition matrices to enhance the context-awareness 
% ability 
of the WFA.
% \st{Additionally,} 
We also propose two data augmentation tactics to track more dynamic behaviours of RNN, \xy{which further allows us to improve the extraction precision}.
% Based on our WFA extraction approach, 
Based on the extracted model, we 
% further 
% \xy{show how the extracted automata assist in the explanation of RNNs.}
propose an explanation method for RNNs \xy{including a word embedding method -- Transition Matrix Embeddings (TME) and TME-based task oriented explanation for the target RNN.}}
% Specifically, 
% % by leveraging the extracted WFA, 
% % we introduce a word embedding method named Transition Matrix Embeddings (TME), and construct task-oriented explanations of the target RNN, including 
% a word-wise global explanation of RNNs and 
% a contrastive method to explain task-oriented word semantics learned by RNNs.}
% Experiment results on two popular natural language datasets 
\xy{Our evaluation demonstrates the advantage of our method
%\footnote{Code is available at \url{https://github.com/weizeming/Extract\_WFA\_from\_RNN\_for\_NL}.}
in extraction precision than existing approaches, and 
% We also validate 
the effectiveness of TME-based explanation method in applications to pretraining and adversarial example generation.}
\end{abstract}

\begin{keyword}
Abstraction \sep Explanation \sep Weighted Finite Automata  \sep Natural Languages \sep Recurrent Neural Networks 

\end{keyword}

\end{frontmatter}
\section{Introduction}
In the last decade, deep learning (DL) has been widely deployed in a range of applications, such as image processing~\cite{He_2016_CVPR}, speech recognition~\cite{abdel2014convolutional} and natural language processing~\cite{goldberg2017neural}.
In particular, recurrent neural networks (RNNs) achieve great success in sequential data processing, e.g., time series forecasting~\cite{che2018recurrent}, text classification~\cite{wang2019convolutional} and language translation~\cite{datta2020neural}.
However, the complex internal design and gate control of RNNs make the interpretation and analysis of their behaviours rather challenging.
% \alt{\st{There are two main challenges to understanding RNNs, the first is how to abstract RNNs as explainable models, and the second is how to use these models to interpret RNNs.}}

% \alt{For the first part,}
Recently,
much progress has been made to abstract RNN as a finite automaton, that is,  a finite-state model with explicit states and transition matrix to characterize the behaviours of RNN in processing sequential data.
%The extracted automaton also provides a practical foundation for analyzing and interpreting RNN behaviours, based on which existing mature techniques, such as logical formalism~\cite{logic} and model checking~\cite{modelChecking}, can be leveraged for RNN analysis
Up to the present, a series of extraction approaches leverage explicit learning algorithms (e.g., $L^*$ algorithm~\cite{angluin}) to extract a surrogate model of RNN.
Such exact learning procedure has achieved great success in capturing the state dynamics of RNNs when processing formal languages~\cite{weiss2018,weiss2019,ok2020}.
However, the computational complexity of the exact learning algorithm limits its scalability to construct abstract models from RNNs for natural language tasks.
Another technical line for automata extraction 
from RNNs 
is the compositional approach, which uses unsupervised learning algorithms to obtain discrete partitions of RNNs' state vectors and construct the transition diagram based on 
% the discrete clusters and 
the concrete state dynamics of RNNs.
\alt{This approach demonstrates better scalability and has been applied to robustness analysis and repairment of RNNs on large-scale tasks~\cite{wang2018,wang2018verification,du2019,dong2020,du2020,xie2021},
but falls short in extraction precision.
% as a trade-off to the computational complexity
% \alt{, but is faced with the problem of extraction consistency as a trade-off to the computational complexity.}
%, the compositional approach is faced with the problem of extraction consistency.
% \alt{Nevertheless, \st{since RNNs can be applied to various tasks involving sequential data,} }and customized explanation 
A precise and scalable extraction approach for RNNs in the context of natural language tasks is needed.} 
% has not been well explored.}

\alt{Regarding \xy{model-based explanation}, current extraction methods \xy{are limited to utilizing finite automata as a global interpretable model with explicit states and transition rules for RNNs}. 
% However, 
\xy{The information extracted in the 
% \st{complex} 
transition diagram of automata is not fully exploited in understanding RNN behaviors for natural language tasks.
In particular, given that the alphabet size of natural language datasets is quite large, the extracted rules in the transition matrix are difficult to grasp and interpret.
% \st{more explicit and} 
A more comprehensible explanation method that can effectively exploits the extracted information to assist in understanding RNN behaviors remains underexplored.
}}
% \st{for natural language tasks}

\alt{
In this paper, 
we propose a general framework
of Weighted Finite Automata (WFA) extraction and explanation for RNNs to tackle the above challenges.
% we aim to improve the automata-based extraction and explanation of RNNs designed for natural language tasks.
To address the first challenge, we 
% identify
% We first identify several problems that occur in this context, and 
propose a complete pipeline to extract more precise automata for RNNs in the context of natural language tasks. We identify two problems that cause precision deficiency in natural language tasks: (1) \textit{transition sparsity}: the transition dynamics are usually sparse in natural language tasks, due to the large alphabet size 
% (than formal languages) 
and the dependency on a finite set of (sequential) data in the extraction procedure.
% when processing low-frequency tokens (words),
% By exploring the problems specified for natural language tasks,
% we find that two issues of automata extraction are yet to be addressed. Firstly,
% the alphabet size of natural language datasets is far larger than formal languages; meanwhile, the extraction procedure is based on a finite set of (sequential) data. As a result, the transition dynamics are usually scarce when processing low-frequency tokens (words),
% and the sparsity problem in the transition matrices severely impacts the behavior consistency between RNNs and the extracted models.
% \alt{We refer this issue as \textbf{transition sparsity} problem.}
%However, the transition sparsity of the extracted automata for natural language tasks is yet to be addressed.
(2)\textit{context loss:} the tracking of long-term context of RNNs (e.g., LSTM networks~\cite{lstm}) is inevitably compromised due to the abstraction.
To deal with the transition sparsity problem, 
% \xy{especially when} 
% that 
% no transition rules are learned at a certain state for a certain word (also referred to as \textit{missing rows}),
% in transition matrices,
we propose a method to fill in 
% the transition rules for 
the missing transition rules based on the semantics of abstract states.
We also propose two tactics to augment the data samples to learn more transition behaviours of RNNs, which further alleviates the transition sparsity problem.
To enhance the context awareness of WFAs, 
we adjust the transition matrices to preserve partial context information from the previous states.
}

\alt{
% As discussed, the extracted WFAs are not sufficient to interpret RNNs for natural language tasks.
% As the extended version of~\cite{wei2022extracting}, in this paper
To address the second challenge, 
we utilize the extracted WFAs to interpret the behaviours of RNNs.
Motivated by the observation that
%Notice that 
the transition matrices of the extracted WFAs capture the behaviour of the source RNNs,
% when processing different input words $\sigma$.
%Motivated by this observation, 
we propose a word embedding method -- Transition Matrix Embeddings (TME) 
% by leveraging the extracted information in WFAs 
to construct task-oriented explanations for the target RNNs.
% {Overall, the proposed embedding method TME captures the semantics of the words that the RNNs learned from the task. 
% We then 
Further, by leveraging the information captured in TME, we propose a global explanation method for word attribution to RNNs' decisions
   %Leveraging 
   % to analyze the influence of each word 
   % Further, we propose 
    and a contrastive method to investigate the difference between task-oriented TME and pretrained word embeddings (\textit{e.g.}, Glove~\cite{pennington2014glove}). We 
    % empirically show the difference of task-oriented word semantics that the RNN learned from the conventional semantics,
    % and 
    validate the effectiveness of the contrastive explanation with applications to pretraining boost and adversarial example generation\footnote{Code is available at \url{https://github.com/weizeming/Extract\_WFA\_from\_RNN\_for\_NL}}.}
    
    % \alt{In a word, our contribution can be divided into two parts. The former is a WFA extraction algorithm from RNNs for natural language tasks, which can be summarizes as:}
    \xy{We summarize our contributions as follows:
    \begin{enumerate}
        \item[(a)] 
        We propose 
        a complete WFA extraction algorithm from RNNs designed for natural language tasks.
        % \alt{We identify the transition sparsity and context-aware problem in WFA extraction from RNNs for natural language tasks;}
        %A novel approach to extracting transition rules of WFAs from RNNs to address the transition sparsity problem;
        \item[(b)] \zmw{Experiments on benchmark datasets demonstrate that the proposed heuristic methods effectively improve the extraction precision by alleviating the transition sparsity and context loss problems.}
        % \alt{To address these problems, we propose a complete WFA extraction algorithm from RNNs customized for natural language tasks;}
        % %An heuristic method of adjusting transition rules to enhance the context-aware ability of WFAs;
        \item[(c)] We propose a novel word embedding -- Transition Matrix Embeddings (TME), based on which a global explanation method for word attribution and a contrastive approach for task-oriented explanation 
        % semantics of words the 
        of RNNs are proposed. 
        % learned.
        % \alt{Experiment on two benchmark datasets shows the effectiveness of our approach on extraction precision.}
        %A data augmentation method on training samples to track more transition behaviours of RNNs.}
    \end{enumerate}}

    % % \alt{The latter part of our contribution is an WFA based explanation framework of the RNNs for natural langauge tasks, which can be summarizes as:}
    % \begin{enumerate}
    %     % \item[(d)] \alt{We propose a word embedding method named Transition Matrix Embeddings (TME) by leveraging the extracted WFA;}% to explain and interpret behaviours of the source RNNs;}
    %     \item[(e)] \alt{Based on TME, we further propose a global explanation approach of RNNs by analyzing the influence of each word on its decision;}% based on TME;}
    %     \item[(f)] \alt{We finally propose a contrastive approach to understand task-oriented semantics of words the RNNs learned.}
    % \end{enumerate}
    
    \alt{The organization of this paper is as follows. 
    In Section \ref{sec:preliminary}, we present preliminaries about recurrent neural networks, weighted finite automata, and related notations and concepts.
    In Section \ref{sec:approach}, we present our transition rule extraction approach, including 
   an overview
    % a generic outline 
    on the automata extraction procedure, the transition rule complement method for transition sparsity, the transition rule adjustment method for context-awareness enhancement,
    %We then present 
    and the data augmentation tactics.
    %in Section \ref{sec:augmentation} to reinforce the learning of dynamic behaviours from RNNs, 
    %along with the computational complexity analysis of the overall extraction approach.
    In Section \ref{sec:experiments}, we present the experimental evaluation towards the extraction consistency of our approach on two natural language tasks. 
    % Additionally, 
    We introduce the transition matrix embedding based explanation framework for RNNs in Section~\ref{sec:expla}, and discuss our extraction algorithm including computational complexity analysis and applicability to other RNNs at the end of Section~\ref{sec:expla}.}
    Finally, we discuss related works in Section \ref{sec:related} and conclude our work in Section \ref{sec:conclusion}.
    
    %\xyc{Shall we put explanation section before experiments?}
    % 

\section{Preliminaries}
\label{sec:preliminary}
In this section,
we present the notations and definitions that will be used throughout the paper.
% We appoint the notations and basic concepts used in this paper.
% \paragraph{Notations.}
Given a finite alphabet $\Sigma$, we 
use $\Sigma^*$ to denote the set of sequences over $\Sigma$ and $\varepsilon$ to denote the empty sequence.
% denote $\Sigma^*$ as the set of sequences over $\Sigma$ and $\varepsilon$ as the empty sequence.
For $w\in\Sigma^*$, we use $|w|$ to denote its length, its $i$-th word as $w_i$ and its prefix with length $i$ as $w[:i]$.
For $x\in\Sigma$, $w\cdot x$ represents the concatenation of $w$ and $x$. 
% For $w\in\Sigma^*$, we denote $|w|$ as its length, $w_i$ as its $i$-th word and $w[:i]$ as its prefix with length $i$.
% For $x\in\Sigma$, $w\cdot x$ presents the concatenation of $w$ and $x$. 

\begin{definition}[RNN]
A \textit{Recurrent Neural Network (RNN)} for natural languages is a tuple
$\mathcal{R}=(\mathcal X, \mathcal{S}, \mathcal{O},f, p)$,
where $\mathcal X$ is the input 
space; 
% specifically is an 
% alphabet for NLP tasks;
$\mathcal{S}$ is the internal state space;
$\mathcal{O}$ is the probabilistic output space;
$f: \mathcal{S}\times\mathcal X\to \mathcal{S}$ is the transition function;
$p: \mathcal{S}\to\mathcal{O}$ is the prediction function.
\end{definition}
%\xy{zm: seems confusion with the alphabet and input space; fix it.}
%\zm{replace $\Sigma$ with $\mathcal X$}
%\xy{Paragraphs: replicated dots for the paragraph format ..}

\paragraph{RNN Configuration}
In this paper, we consider RNN as a black-box model and focus on its stepwise probabilistic output for each input sequence.
% The following definition of configuration maps each sentence in $\Sigma^*$ to a probabilistic output given by RNN.
The following definition of configuration characterizes the probabilistic outputs in response to a sequential input
% in $\Sigma^*$ to a 
fed to RNN.
    % First 
    Given an alphabet $\Sigma$,
    % as input space,
    let $\xi:\Sigma\to\mathcal X$ be the function that maps each word in $\Sigma$
    to its embedding vector in $\mathcal X$.
    We define $f^*: \mathcal{S}\times\Sigma^*\to\mathcal{S}$ recursively as $f^*(s_0,\xi(w\cdot x))=f(f^*(s_0, \xi(w)),\xi(x))$ and $f^*(s_0,\varepsilon)=s_0$,
    where $s_0$ is the initial state of $\mathcal{R}$.
The RNN configuration $\delta:\Sigma^*\to \mathcal{O}$ is defined as $\delta(w)=p(f^*(s_0, w))$.
%\xy{zm: errors in the f definition and delta definition, revise.}\zm{done}
% \xy{alphabet as input space?}
% \zm{maybe word embedding vector is better}

\paragraph{Output Trace}
To record the stepwise behavior of RNN when processing an input sequence $w$, 
% in $\Sigma^*$,
we define the \textit{Output Trace} of $w$, i.e., the probabilistic output sequence, as
% of $w\in\Sigma^*$ as 
% a probabilistic output sequence
$T(w)=\{\delta(w[:i])\}_{i=1}^{|w|}$.
The $i$-th item of $T(w)$ indicates the probabilistic output given by $\mathcal{R}$ 
% when 
after taking the prefix of $w$ with length $i$ as input.

% So far we have complete the definitions about RNN.
% Next, we focus on the preliminaries for weighted finite automaton(WFA).
\begin{definition}[WFA]
Given a finite alphabet $\Sigma$, a \textit{Weighted Finite Automaton (WFA)} over $\Sigma$ is a tuple
$\mathcal{A}=(\hat S, \Sigma, E, \hat s _0, I, F)$, where $\hat S$ is the finite 
set of abstract states;
% abstract state set;
$E=\{E_\sigma|\sigma\in \Sigma\}$ is the
set of transition matrix $E_\sigma$ with size $|\hat S|\times|\hat S|$ for each token $\sigma\in\Sigma$;
% the transition matrices set, for $\sigma\in\Sigma$, 
% $E_\sigma$ denotes its transition matrix 
% with size $|\hat S|\times|\hat S|$;
$\hat s _0\in \hat S$ is the initial state;
$I$ is the initial vector, a row vector with size $|\hat S|$;
$F$ is the final vector, a column vector with size $|\hat S|$.
\end{definition}

\paragraph{Abstract States}
Given a RNN $\mathcal{R}$ and a dataset $\mathcal D$,
let $\hat{\mathcal O}$ denote all stepwise probabilistic outputs given by executing $\mathcal R$ on $\mathcal D$,
i.e. $\hat{\mathcal{O}}=\bigcup\limits_{w\in\mathcal D}T(w)$.
The abstraction function $\lambda:\hat{\mathcal{O}}\to \hat{S}$ maps each probabilistic output to an abstract state $\hat s\in \hat S$.
As a result, the output set is divided into a number of abstract states by $\lambda$.
% where $\hat S$ is a finite set of abstract states.
For each $\hat s\in\hat S$, the state $\hat s$ has explicit semantics that
% which means 
the probabilistic outputs corresponding to $\hat s$ has similar distribution.
% Hence, the output set is divided into a number of abstract states by $\lambda$.
In this paper, we leverage the \textit{k-means} algorithm to construct the abstraction function.
We cluster all probabilistic outputs in \alt{$\hat{\mathcal{O}} $} into some abstract states.
In this way, we construct the set of abstract states $\hat S$ with these discrete clusters and an initial state $\hat s_0$.

For a state $\hat s \in \hat S$, 
we define the \textit{center} of $\hat s$ as the 
average value of the probabilistic outputs \alt{$\hat o\in \hat{\mathcal{O}}$ }which are mapped to $\hat s$. 
More formally, the center of $\hat s$ is defined as follows:
$$\rho(\hat s)=\underset{\lambda({\hat o}) = \hat s}{\mathrm{Avg}}\{\hat o\}.$$
The center $\rho(\hat s)$ represents an approximation of the distribution tendency of probabilistic outputs $\hat o$ in $\hat s$.
% The center $\rho(\hat s)$ presents an approximation for each probabilistic output $\hat o$ consisting with $\hat s$.
% It needs to be mentioned that we also select $\rho(\hat s_i)$ as the $i$-th item of terminal vector $P$,
% indicating that if the WFA terminates at state $\hat s_i$, we will take $\rho(\hat s_i)$ as its weight.
\xy{We then use the center $\rho(\hat s)$ as its weight for each state $\hat s \in \hat S$.
% as the center captures an approximation of the distribution tendency of this state.
% Therefore, 
The final vector $F$ is thus formulated as $(\rho(\hat s_0),\rho(\hat s_1),\cdots, \rho(\hat s_{|\hat S|-1}))^t$.}
%\xy{zm:better revise final vector P to F.}
%\zm{done}

\paragraph{Abstract Transitions} In order to capture the dynamic behavior of RNN $\mathcal{R}$,
% when the current output is $y\in \mathcal{O}$ and the next input word is $\sigma$,
we define the abstract transition as a triple $(\hat{s}, \sigma, \hat{s}')$ where the original state $\hat{s}$
is the abstract state corresponding to a specific output $y$, i.e. $\hat s=\lambda(y)$; $\sigma$ is the next word of the input sequence to consume; $\hat s '$
is the destination state $\lambda(y')$ after $\mathcal{R}$ reads $\sigma$ and outputs $y'$. 
% Suppose that the new probabilistic output of $\mathcal R$ after it read $\sigma$ is $y'$,
% then $\hat s '$ can be formulated as $\hat s ' = \lambda(y')$.
% Particularly, if $\sigma$ is the first word in its sentence, then $\hat s$ should be the initial state $\hat s_0$.
%\xy{zm: if there are multiple initial states, they  cannot be simply represented as one state $s_0$.}
%\zm{remove the previous sentence}
We use $\mathcal T$ to denote the set of all abstract transitions tracked from the execution of $\mathcal R$ on training samples.

\paragraph{Abstract Transition Count Matrices} For each word $\sigma\in \Sigma$,
the abstract transition count matrix
% (abbrev. count matrix) 
of $\sigma$ is a matrix $\hat T_\sigma$
with size $|\hat S|\times|\hat S|$. 
The count matrices record the number of times that each abstract transition is triggered.
%\xy{revise this sentence.}. \zm{done}
Given the set of abstract transitions $\mathcal{T}$, the count matrix of $\sigma$ can be calculated as
$$\hat T_\sigma[i,j]=\mathcal{T}.count((\hat s_i,\sigma,\hat s_j)),\quad 1\le i,j\le |\hat S|.$$ 

% \begin{definition}[WFA]
% Given a finite alphabet $\Sigma$, a \textit{Weighted Finite Automaton (WFA)} over $\Sigma$ is a tuple
% $\mathcal{A}=(\hat S, \Sigma, E, \hat s _0, I, P)$, where $\hat S$ is the finite abstract state set;
% $E=\{E_\sigma|\sigma\in \Sigma\}$ is the transition matrices set, for $\sigma\in\Sigma$, $E_\sigma$ denotes its transition matrix with size $|\hat S|\times|\hat S|$;
% $\hat s _0\in \hat S$ is the initial state;
% $I$ is the initial vector, a row vector with size $|\hat S|$;
% $P$ is the final vector, a column vector with size $|\hat S|$.
% \end{definition}

% Technically, the states set $\hat S$ contains the abstract states and initial state $\hat s_0$,
As for the remaining components,
the alphabet $\Sigma$ is consistent with the alphabet of training set $\mathcal D$.
% the construction of transition matrices $E$ is elaborated in next section.
The initial vector $I$ is formulated according to the initial state $\hat s_0$.
%\xy{revise accordingly with regard to the alphabet and initial state.}
% For each abstract state in $\hat S$, we use its center as its weight,
% since the center presents an approximation of the probabilistic outputs consisting with this state.
% So the final vector $P$ is $(\rho(\hat s_0),\rho(\hat s_1),\cdots, \rho(\hat s_{|\hat S|-1}))^t$.

For an input sequence $w=w_1w_2\cdots w_n\in\Sigma^*$, the WFA will calculate
% give 
its weight following $$I\cdot E_{w_1}\cdot E_{w_2}\cdots E_{w_n}\cdot F.$$

% The detailed extracting workflow is elaborated in Section 3.

\begin{figure}[t]
    \includegraphics[width=\textwidth]{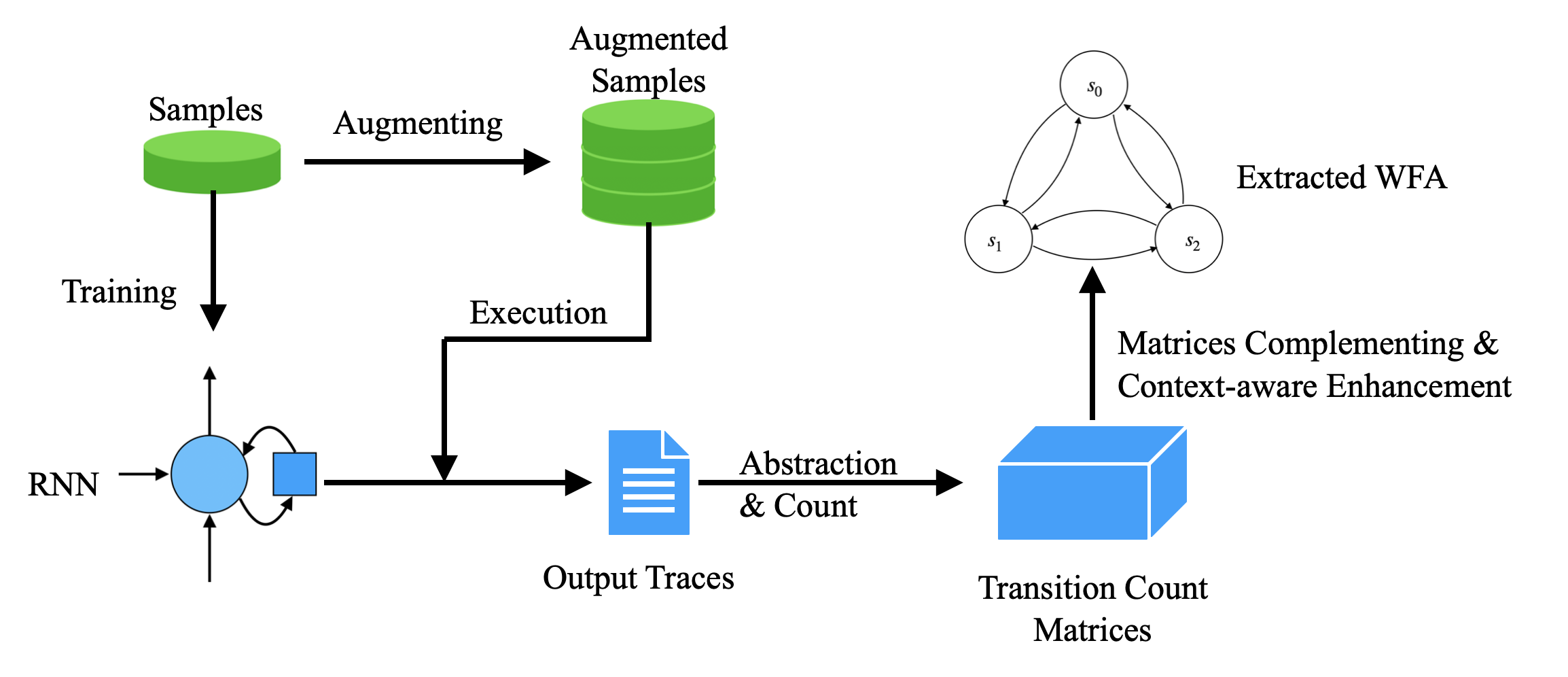}
    \caption{An illustration of our approach to extracting WFA from RNN.} \label{fig1}
\end{figure}

\section{Weighted Automata Extraction Scheme}
\label{sec:approach}
\subsection{Overview}
\label{subsec:outline}

We present the workflow of our extraction procedure
% is shown 
in Figure~\ref{fig1}.
% To start with,
As the first step,
we generate augmented sample set ${\mathcal D}$ from the original training set $\mathcal D_0$ to enrich the transition dynamics of RNN behaviours and alleviate the transition sparsity.
% to track more RNN behaviours.
%We propose two tactics, \textit{Synonym Replacement} and \textit{Dropout}, to achieve this, which is detailed in Section~\ref{sec:augmentation}.
% implement this.
% By generating more sentences, we can capture more behaviours of RNN,
% which provides more useful transition information to construct the transition matrices and mitigates the overfitting problem.
% The detailed augmentation tactics are discussed in Section 4.
Then, we execute RNN $\mathcal{R}$ on the augmented sample set ${\mathcal D}$,
% while 
and record the probabilistic output trace $T(w)$ of each input sentence $w\in \mathcal D$.
With the output set $\hat O=\bigcup\limits_{w\in \mathcal D}T(w)$, we cluster the probabilistic outputs into abstract states $\hat S$,
and generate abstract transitions $\mathcal T$ from the output traces $\{T(w)|w\in \mathcal D\}$.
All transitions constitute the abstract transition count matrices $\hat T_\sigma$ for all $\sigma\in\Sigma$.

%\xy{Change all ``base on'' to ``based on''. From here. All previous ones have been corrected.}
Next, we construct the transition matrices $E=\{E_\sigma|\sigma\in\Sigma\}$.
    Based on the abstract states $\hat S$ and count matrices $\hat T$,
    we construct the transition matrix $E_\sigma$ for each word $\sigma\in\Sigma$.
    % empirically.
    Specifically, we use frequencies to calculate the transition probabilities.
    Suppose that there are $n$ abstract states in $\hat S$.
    The $i$-th row of $E_\sigma$, which indicates the  probabilistic transition distribution over states when $\mathcal{R}$ is in state $\hat s _i$
    and 
    % reads 
    consumes $\sigma$, is calculated as 
    \begin{equation}
        E_\sigma[i,j] = \frac{\hat T_\sigma[i,j]}{\sum\limits_{k=1}^n \hat T_\sigma[i,k]}.\label{transition matrix}
    \end{equation}
    This empirical rule faces the problem that the denominator of (\ref{transition matrix}) could be zero,
    which means that the word $\sigma$ never appears when the RNN $\mathcal{R}$ is in abstract state $\hat s_i$.
    In this case, one should decide how to fill 
    in the transition rule of the \textit{missing rows} in $E_\sigma$.
    % tthe $i$-th row in $E_\sigma$.
    % We define this kind of rows as \textit{missing rows}.
    In Section \ref{subsec:missing_rows},
    we present a novel approach for transition rule complement.
    \alt{Further, 
    % for the sake of 
    to preserve more contextual information 
    % when executing WFA 
    % in the case of transition sparsity,
    during processing the input sequence, 
    % \xyc{Not because of sparsity right? use the motivation for context enhancement}
    we propose an approach to enhancing the context-awareness
    % ability 
    of WFA by adjusting the transition matrices,
    % The context-awareness enhancement approach 
    which is presented in Section \ref{subsec:context}.}
%{Note that our approach is generic and could be applied to other RNNs besides the domain of natural language processing.}

\subsection{Missing Rows Complement}
\label{subsec:missing_rows}
    %As discussed before, 
    Existing approaches for transition rule extraction usually face the problem of transition sparsity,
    i.e., \textit{missing rows} in the transition diagram.
    % the matrices construction rule faces the problem of processing \textit{missing rows}.
    % The rules for dealing with formal languages doesn't need to pay attention to this problem,
    % since the size of the alphabet is small and each word in the alphabet can appear sufficient number of times,
    % making the probability of occurrence of missing rows very limited.
    \xy{In the context of formal languages, the probability of the occurrence of missing rows is quite low, 
    % limited,
    % The rules for dealing with formal languages doesn't need to pay attention to this problem,
    since the size of the alphabet is small and each token in the alphabet can appear sufficient number of times.}
    % making the probability of occurrence of missing rows very limited.
    However, in the context of natural language processing, the occurrence of missing rows is quite frequent.
    The following proposition gives an approximation of 
    the occurrence frequency
    % times 
    of missing rows.

    \begin{proposition}
        % Given 
        Assume 
        an alphabet $\Sigma$ with $m=|\Sigma|$ words, a natural language dataset $\mathcal D$ over $\Sigma$ which has $N$ words in total,
        % with 
        a RNN $\mathcal{R}$ trained on $\mathcal D$,
        % it 
        % and 
        % abstracted as 
        the extracted abstract states $\hat S$ and 
        transitions 
        % transitional 
        $\mathcal{T}$.
        Let $\sigma_i$ 
        denote 
        the $i$-th most frequent word 
        occurred
        in $\mathcal D$ and $t_i = \mathcal{T}.count((*,\sigma_i,*))$
        indicates
        the occurrence times of $\sigma_i$ in $\mathcal D$.
        The median of $\{t_i|1\le i\le m\}$ can be estimated as $$t_{[\frac{m}{2}]}=\frac{2N}{m\cdot \ln m}.$$
    \end{proposition}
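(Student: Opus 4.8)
The plan is to invoke Zipf's law, the empirical regularity governing word-frequency distributions in natural language corpora, and combine it with the total-count normalization $\sum_i t_i = N$. Since $\sigma_i$ is defined to be the $i$-th most frequent word, the sequence $t_1 \ge t_2 \ge \cdots \ge t_m$ is already sorted in decreasing order, so the median of the multiset $\{t_i \mid 1 \le i \le m\}$ is simply its middle term $t_{[\frac{m}{2}]}$. This reduces the whole problem to determining the value of $t_i$ at the median rank $i = \frac{m}{2}$.

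First I would posit the Zipfian form $t_i \approx C/i$ for an unknown proportionality constant $C$. To pin down $C$, I would use the fact that summing over all words recovers the corpus size:
\begin{equation*}
N = \sum_{i=1}^m t_i \approx C \sum_{i=1}^m \frac{1}{i} = C \cdot H_m,
\end{equation*}
where $H_m$ denotes the $m$-th harmonic number. Applying the classical asymptotic $H_m \approx \ln m$ (valid for large $m$, up to the additive Euler--Mascheroni constant) then yields $C \approx N / \ln m$. Substituting back and evaluating at the median rank $i = \frac{m}{2}$ gives
\begin{equation*}
t_{[\frac{m}{2}]} \approx \frac{C}{m/2} = \frac{2C}{m} = \frac{2N}{m \cdot \ln m},
\end{equation*}
which is exactly the claimed estimate.

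The main obstacle is not the algebra, which is routine, but the justification of the modeling assumption. Zipf's law is an empirical rather than a provable fact, so the proposition is inherently approximate --- this is precisely why the statement says the median \emph{can be estimated as} rather than \emph{equals}. The honest approach is to state Zipf's law as a premise appropriate to natural language datasets and build the derivation on top of it, rather than attempting to derive it from first principles. A secondary subtlety is the harmonic-number approximation $H_m \approx \ln m$, which silently discards the constant $\gamma \approx 0.577$; this is negligible for the large alphabets typical of natural language tasks but deserves to be flagged. Finally, one should note that identifying the median with the middle-rank term relies on the $t_i$ being pre-sorted, which holds by the very definition of $\sigma_i$ as the $i$-th most frequent word.
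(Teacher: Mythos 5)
Your proposal is correct and follows essentially the same route as the paper: both invoke Zipf's law in the normalized form $t_i/N \approx i^{-1}/\sum_{k=1}^m k^{-1}$, approximate the harmonic sum by $\ln m$, and substitute $i = \frac{m}{2}$. Your additional remarks on the empirical status of Zipf's law, the dropped Euler--Mascheroni constant, and the pre-sorted ordering of the $t_i$ are sound clarifications of points the paper leaves implicit.
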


    \begin{proof}
        The Zipf's law~\cite{powers1998applications} shows that $$\frac{t_i}{N}\approx \frac{i^{-1}}{\sum\limits_{k=1}^m k^{-1}}.$$
        Note that $\sum\limits_{k=1}^m k^{-1}\approx \ln m$ and take $i$ to be $\frac{m}{2}$, we complete our proof.
    
    \end{proof}
%\xy{cite zips law reference}
    \begin{example}
        In the QC news dataset~\cite{qc}, which has $m=20317$ words in its alphabet and $N=205927$ words in total, 
        % in the training dataset,
        the median of $\{t_i\}$ is approximated to 
        $\frac{2N}{m\cdot \ln m}\approx 2$.
        This indicates that about half of $E_\sigma$ are constructed with no more than $2$ transitions.
        In practice,
        % practical, 
        the number of abstract states is usually far more than the transition numbers of these words,
        making most rows 
        % for their 
        of their transition matrices \textit{missing rows}.
    \end{example}

    Filling the missing row with $\vec 0 $ is a simple solution, since no information were provided from the transitions.
    However, as estimated above, this solution will 
    % cause that 
    lead to the problem of transition sparsity, i.e.,
    the transition matrices for uncommon words are nearly null.
    Consequently,
    if the input sequence includes some uncommon words,
    the weights over states tend to vanish.
    % may 
    %when reading a sentence, which often occurs when the sentence includes some uncommon words.\
    %\xy{This sentence is confusing, rephrase in a more clear way.}
    We refer to this solution as \textit{null filling}.
    
    Another simple idea is to use the uniform distribution over states for fairness.
    In \cite{weiss2019}, the
    % applied 
    % this kind of 
    uniform distribution is used as the transition distribution
    for unseen tokens
    in the context of formal language tasks.
    However, for natural language processing,
    this solution still loses information of the current word, despite that it avoids the weight vanishment over states.
    We refer to this solution as \textit{uniform filling}.
    \cite{zhang2021} uses the \textit{synonym} transition distribution for an unseen token at a certain state. However, it increases the computation overhead when performing inference on test data, since it requires to calculate and sort the distance between the available tokens at a certain state and the unseen token.
    
    \zm{To this end, we propose a novel approach to constructing the transition matrices based on two empirical observations.
    First, each abstract state $\hat s \in \hat S$ has explicit semantics, i.e. the probabilistic distribution over labels,
    and similar abstract states tend to share more similar transition behaviours. 
    \xy{The semantic distance between abstract states is defined as follows.}
    }
    
    %To this end, we propose an empirical approach to constructing the transition matrices with reference to the semantics, i.e. the probabilistic distribution over labels, of the abstract states in $\hat S$. Specifically, we evaluate the similarity between all abstract states once and for all, and fill the missing rows based on the transition distribution in other states.

    \begin{definition}[State Distance]
    % [Distance Between States]
        For two abstract states $\hat s_1$ and $\hat s_2$, the distance between $\hat s_1$ and $\hat s_2 $ is
        defined by the Euclidean distance between their center: $$dist(\hat s_1, \hat s_2) = \lVert \rho(\hat s_1)-\rho(\hat s_2) \rVert_2.$$
    \end{definition}

    % Next, 
    We calculate the distance between 
    % every two 
    each pair of abstract states,
    % and save them in the 
    which forms a \textit{distance matrix} $M$ 
    where each element 
    $M[i,j] = dist(\hat s_i,\hat s_j)$ for $1\le i,j\le |\hat S|$.
    %\xy{Using the same notation D for different things, make them distinct and consistent throughout the paper.}
    \zm{For a missing row in $E_\sigma$, following the heuristics that 
    similar abstract states are more likely to have similar behaviours,
    we observe the transition behaviours 
    from other abstract states,
    and simulate the missing transition behaviours
    weighted by distance between states.
   Particularly, in order to avoid numerical underflow, we leverage \textit{softmin} on distance to 
    % give more weight 
    % to similar states.
    bias the weight to states that share more similarity.}
    % Therefore, 
    Formally,
    for a missing row $E_\sigma[i]$, the weight of information 
    % provided 
    % from
    set for
    another row $E_\sigma[j]$ is defined by $e^{-M[i,j]}$.

    \zm{Second, it is also observed that sometimes the RNN just remains in the current state after reading a certain word.
    Intuitively, this is because part of words in the sentence do not deliver significant information in the task.
    Therefore, we consider simulating
    behaviours from other states whilst remaining 
    in the current state with a certain probability.}
    
    %However, the transition behaviours triggered from other states are not always trustworthy.Simply following other states may cause the overfitting problem and result in precision loss. Empirically, it is observed that sometimes the RNN just remains in the current state after reading a certain word,since this word may not deliver significant information in the task. Therefore, we consider simulating behaviours from other states whilst remaining in the current state with a certain probability.
    
    \zm{In order to balance the trade-off between referring to behaviours from other states and remaining still,}
    %For this purpose, 
    we introduce a hyper-parameter $\beta$ named \textit{reference rate}, such that when WFA 
    % meets 
    is faced with a missing row,
    it has a probability of $\beta$ to refer 
    % the behaviours 
    to the transition behaviours
    from other states,
    and 
     in the meanwhile
    has a probability of $1-\beta$ to keep still.
    % \zm{Note that the selection of $\beta$ can be correlated to the proportion of transitions $(\hat s, \sigma, \hat s')$ in $\mathcal T$ where $\hat s=\hat s'$.}
    {We select the parameter $\beta$ according to the proportion of self-transitions, i.e., transitions $(\hat s, \sigma, \hat s')$ in $\mathcal T$ where $\hat s=\hat s'$.}

    To sum up, the complete 
    % formula 
    transition rule
    for the missing row is 
    \begin{equation}
    E_\sigma[i,j] = \beta\cdot \frac{\sum\limits_{k=1}^n e^{- M[i,k]} \cdot\hat T _\sigma[k,j]}{\sum\limits_{l=1}^n\sum\limits_{k=1}^n e^{- M[i,k]} \cdot\hat T _\sigma[k,l]} + (1-\beta)\cdot\delta_{i,j}.\label{missing row}
    \end{equation}
    Here $\delta_{i,j}$ is the Kronecker symbol:
    $$\delta_{i,j} =\begin{cases}1, & j=i\\0, & j\ne i\end{cases}.$$
    In practice, we can calculate $\sum\limits_{k=1}^n e^{- M[i,k]} \cdot\hat T _\sigma[k,j]$ for each $j$ and 
    % divided by their sum,
    then make division on their summation once and for all,
    which can reduce the computation overhead on transition rule extraction. 
    % time consumption.
    % In Section 5, we will compare our approach, named \textit{empirical filling}, with the null filling approach and the uniform filling approach.

\subsection{Context-Awareness Enhancement}
\label{subsec:context}
    % \subsubsection{Motivation.}
    
    % In 
    For NLP tasks, the memorization of long-term context information is crucial.
    % One of the advantages of RNN is its ability of context-awareness.
    One of the advantages of RNN and its advanced design LSTM networks is 
    the ability to capture long-term dependency.
    We 
    % hope 
    expect the extracted WFA to simulate the step-wise behaviours of RNNs whilst 
    % remaining 
    keeping track of context information  along with the state transition.
    % with a certain rate.
    To this end, we propose an approach to adjusting the transition matrix
    % novel context-awareness enhancement approach,
    such that
    the WFA 
    % may decide to 
    can remain in the current state with a certain probability.
%\xy{approach to ``doing'', if see any other one, change to the right grammar}

    Specifically, we select a hyper-parameter $\alpha\in [0,1]$ as the \textit{static probability}.
    % i.e., the probability to remain in the 
    For each word $\sigma\in\Sigma$ and its transition matrix $E_\sigma$,
    we replace the matrix with the \textit{context-awareness enhanced matrix} $\hat E _\sigma$ 
    % where
    as follows:
    \begin{equation}
        \hat E _\sigma = \alpha\cdot I_n + (1-\alpha)\cdot E_\sigma
        \label{context}
    \end{equation}
    where $I_n$ is the identity matrix. 
    
    The context-awareness enhanced matrix has explicit semantics.
    % which means that 
    When the WFA is in state $\hat s_i$ and 
    ready to process a new word $\sigma$,
    it has a probability of $\alpha$ (the \textit{static probability}) to remain in $\hat s_i$,
    or follows the original transition distribution $E_\sigma[i,j]$ with a probability $1-\alpha$.

    Here we present an illustration of how context-awareness enhanced matrices deliver long-term context information.
    Suppose that a context-awareness enhanced WFA $\mathcal A$ is 
    % executing 
    processing a sentence $w\in \Sigma^*$ with length $|w|$.
    We denote $d_i$ as the distribution over all abstract states after $\mathcal A$ reads the prefix $w[:i]$, 
    and
    particularly $d_0=I$ is the initial vector of $\mathcal A$.
    % ;
    We use $Z_i$ to denote 
    the decision made by $\mathcal A$ 
    % with 
    based on $d_{i-1}$ and the original transition matrix $E_{w_i}$.
    Formally, $d_i = d_{i-1}\cdot \hat E_{w_i}$ and $Z_i = d_{i-1}\cdot E_{w_i}$.

    The $d_i$ can be regarded as the information 
    % provided 
    obtained from the prefix $w[:i]$ by $\mathcal A$ before it 
    % reads 
    consumes $w_{i+1}$,
    and $Z_i$ can be considered as the decision made by $\mathcal A$ after it reads $w_{i}$. %\xy{a bit confusing, what's the point that needs to emphasize?}
    \begin{proposition}
        The $i$-th step-wise information $d_i$  delivered by
        processing
        $w[:i]$ contains the decision information $Z_j$
        of prefix $w[:j]$ with a proportion of $(1-\alpha)\cdot \alpha^{i-j}$, $1\le j\le i$.
    \end{proposition}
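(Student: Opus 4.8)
The plan is to turn the matrix recursion $d_i=d_{i-1}\cdot\hat E_{w_i}$ into a scalar-weighted linear recursion in the decisions $Z_j$ and then unroll it. First I would substitute the definition of the context-awareness enhanced matrix from Equation~(\ref{context}), namely $\hat E_{w_i}=\alpha I_n+(1-\alpha)E_{w_i}$, into $d_i=d_{i-1}\cdot\hat E_{w_i}$. Distributing the product gives
\begin{equation}
d_i=\alpha\, d_{i-1}\cdot I_n+(1-\alpha)\, d_{i-1}\cdot E_{w_i}=\alpha\, d_{i-1}+(1-\alpha)\, Z_i,
\end{equation}
where the last equality uses $d_{i-1}\cdot I_n=d_{i-1}$ and the definition $Z_i=d_{i-1}\cdot E_{w_i}$. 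This is the key identity: each step retains an $\alpha$ fraction of the previous state distribution and injects a $(1-\alpha)$ fraction of the fresh decision $Z_i$.

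Next I would unroll this one-step recursion down to $d_0$, arguing by induction on $i$. The closed form I aim to establish is
\begin{equation}
d_i=\alpha^{i}\, d_0+\sum_{j=1}^{i}(1-\alpha)\,\alpha^{\,i-j}\, Z_j.
\end{equation}
The base case $i=0$ is just $d_0=d_0$ with an empty sum. For the inductive step, I would assume the formula for $d_{i-1}$, substitute it into the identity $d_i=\alpha\, d_{i-1}+(1-\alpha)Z_i$, and collect terms: the $\alpha^{i-1}d_0$ term becomes $\alpha^{i}d_0$, each coefficient $(1-\alpha)\alpha^{(i-1)-j}$ on $Z_j$ picks up an extra factor $\alpha$ to become $(1-\alpha)\alpha^{i-j}$, and the newly added summand $(1-\alpha)Z_i$ supplies the $j=i$ term with coefficient $(1-\alpha)\alpha^{0}$. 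Reading off the coefficient of $Z_j$ then yields exactly $(1-\alpha)\alpha^{i-j}$, which is the claimed proportion.

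There is no genuine analytic obstacle here; the argument is a routine telescoping of a linear recursion, so the only thing to watch is the bookkeeping of exponents in the inductive step and the convention that every product is a row vector times a matrix, so that only associativity is used and never commutativity. The one interpretive point worth flagging in the write-up is the residual term $\alpha^{i}d_0$: the decomposition of $d_i$ is an affine combination of the decisions $Z_1,\dots,Z_i$ together with this exponentially decayed contribution of the initial vector, and the proposition concerns precisely the weight attached to each $Z_j$. I would therefore state the closed form explicitly, so that the geometric decay $(1-\alpha)\alpha^{i-j}$ — older decisions contributing exponentially less to $d_i$ — is visible as the intended interpretation.
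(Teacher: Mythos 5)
Your proof is correct and follows essentially the same route as the paper: both derive the one-step identity $d_i=\alpha\,d_{i-1}+(1-\alpha)Z_i$ from Equation~(\ref{context}) and unroll it to the closed form $d_i=(1-\alpha)\sum_{j=1}^{i}\alpha^{i-j}Z_j+\alpha^{i}d_0$, with your $\alpha^{i}d_0$ term matching the paper's $\alpha^{i}\cdot I$ since $d_0=I$. Your explicit inductive bookkeeping is slightly more detailed than the paper's ``using the recursion recursively,'' but the argument is identical.
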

    \begin{proof}
    Since $\hat E_{w_i} = \alpha\cdot I_n + (1-\alpha)\cdot E_{w_i}$, we can calculate that
    \begin{equation}
        d_i = d_{i-1} \cdot \hat E_{w_i} = d_{i-1}\cdot [\alpha\cdot I_n + (1-\alpha)\cdot E_{w_i}]=\alpha\cdot d_{i-1} + (1-\alpha)\cdot Z_i.\label{recursion}
    \end{equation}
    Using (\ref{recursion}) recursively, we have $$d_i = (1-\alpha)\sum_{k=1}^{i} \alpha^{i-k}\cdot Z_k + \alpha^i \cdot I.$$

    \end{proof}
%\xy{change di inthe proof to the new notation}

    This shows the information delivered by $w[:i]$ 
    % contains 
    refers to the decision made by $\mathcal A$ on each prefix included in $w[:i]$,
    and the portion vanishes exponentially.
    % In Section 5, we will see that context-awareness enhanced WFA presents better precision of consisting with RNN.
    The effectiveness of the
    context-awareness enhancement method for transition matrix adjustment will be discussed in Section \ref{sec:experiments}.
    % enhanced WFA presents better precision of consisting with RNN.

    The following example presents 
    % a complete process 
    the complete approach for transition rule extraction, i.e.,  
    to generate the transition matrix $\hat E_\sigma$ with the missing row filled in and context enhanced,
    from the count matrix $\hat T_\sigma$ for a word $\sigma\in\Sigma$.
    \begin{example}
    
    % In a simplified edition, 
    Assume that there are three abstract states in $\hat S = \{\hat s_1,\hat s_2,\hat s_3\}.$
        Suppose the count matrix for $\sigma$ is $\hat T_\sigma$.
        $$
        \hat T_\sigma = \begin{bmatrix} 1 & 3 & 0 \\ 1 & 1 & 0 \\ 0 & 0 & 0\end{bmatrix},
        E_\sigma = \begin{bmatrix}0.25 & 0.75 & 0\\ 0.5 & 0.5 & 0 \\ 0.15 & 0.35 & 0.5\end{bmatrix},
        \hat E_\sigma = \begin{bmatrix}0.4 & 0.6 & 0 \\ 0.4 & 0.6 & 0 \\ 0.12 & 0.28 & 0.6 \end{bmatrix}.
        $$
        For the first two rows (states), there exist transitions for $\sigma$,
        thus we can calculate the transition distribution of
        these two rows in $E_\sigma$ 
        % with 
        in the usual way.
        However, the third row is a \textit{missing row}. 
        We set the \textit{reference rate} as $\beta=0.5$,
        and suppose that the distance between states satisfies $e^{-M[1,3]}= 2 e^{-M[2,3]}$, generally indicating
        the distance between $\hat s_1$ and $\hat s_3$ is nearer than $\hat s_2$ and $\hat s_3$.
        With the transitions from $\hat s_1$ and $\hat s_2$, we can complement the transition rule of the third row in $E_\sigma$ through (\ref{missing row}).
        The result shows that the behavior from $\hat s_3$ is more similar to $\hat s_1$ than $\hat s_2$, due to the 
        smaller distance.
        Finally, we construct $\hat E_\sigma$ with $E_\sigma$. Here we set the \textit{static probability} $\alpha = 0.2$,
        thus $\hat E_\sigma = 0.2\cdot I_3 + 0.8\cdot E_\sigma$. The result shows that the WFA with $\hat E_\sigma$ has higher probability
        to remain in the current state after consuming $\sigma$, which can preserve more information from the prefix before $\sigma$.

    \end{example}
    
\subsection{Data Augmentation}
\label{sec:augmentation}
Our proposed approach for transition rule extraction provides a solution to the transition sparsity problem.
Still, we hope to learn more dynamic transition behaviours from the target RNN, 
especially for the words with relatively low frequency to characterize their transition dynamics sufficiently based on the finite data samples.
Different from formal languages, we can generate more 
natural language samples
automatically,
as long as 
the augmented sequential data are reasonable with clear semantics and compatible with the original learning task.
Based on the augmented samples, we are able to track more behaviours of the RNN 
and build the abstract model with higher precision.
In this section, we 
introduce two data augmentation tactics 
for natural language processing tasks: \textit{Synonym Replacement}
and \textit{Dropout}.

\paragraph{Synonym Replacement} 
Based on the distance quantization among the word embedding vectors,
we can 
obtain a list of synonyms for each word in $\Sigma$.
\xy{For a word $\sigma\in\Sigma$, the \textit{synonyms} of $w$ are 
defined as the 
top-$k$ most similar words of $\sigma$ in $\Sigma$, where $k$ is a hyper-parameter and we set $k$ to 5 by default based on an empirical observation that
% as we observe that 
top-5 similar words are sufficiently reasonable to keep the semantics. 
The similarity among the words is calculated
based on the Euclidean distance between the word embedding vectors over $\Sigma$.}

\alt{Given a dataset $\mathcal D_0$ over $\Sigma$, for each sentence $w\in \mathcal D_0$,
we generate a new sentence $w'$ by replacing some words in $w$ with their synonyms.
Specifically, each word is replaced by a randomly selected synonym in its top-$k$ synonyms with probability $p_r$ (0.4 by default).}

%Note that we hope that the uncommon words in $\Sigma$ should appear more times, so as to gather more dynamic behaviours of RNNs when processing such words. Therefore, we set the probability that a certain word $\sigma\in w$ gets replaced to be in a negative correlation to its frequency of occurrence, i.e. the $i$-th most frequent word is replaced with a probability $\frac{1}{i+1}$.

\paragraph{Dropout}
Inspired by the regularization
 technique \textit{dropout},
we also propose a similar tactic to generate new sentences from $\mathcal D_0$.
Initially, 
we introduce a new word named \textit{unknown word} and denote it as $\left\langle \textbf{unk}\right\rangle$.
\alt{For the sentence $w\in\mathcal D_0$ that has been processed by synonym replacing,
we further replace the words that haven't been replaced with $\left\langle \textbf{unk}\right\rangle$ 
with a certain probability $p_d$ (0.2 by default)}.
Finally, new sentences generated by both synonym replacement and dropout form the augmented dataset $\mathcal D$.

With the dropout tactic, we can observe the behaviours of RNNs when it 
processes an unknown word $\hat\sigma\not\in \Sigma$
that hasn't appeared in $\mathcal D_0$. 
Therefore, the extracted WFA can also have better generalization ability.
\alt{The complete pipeline of the data augmentation algorithm is elaborated in Algorithm~\ref{data_augmentation_alg}.
\xy{Note that the $rand()$ function samples from $[0, 1]$ in a uniform manner.}}

\begin{algorithm}[H]
% \SetAlgoLined \KwResult{Augmented dataset $\mathcal D$}
\alt{
\SetKwInOut{Input}{Input}
\SetKwInOut{Output}{Output} 
\Input{Original dataset $\mathcal D_0$, hyper-parameter $k=5$, $p_r=0.4$, $p_d=0.2$} 
\Output{Augmented dataset $\mathcal D$}
Obtain the synonyms $\sigma_1, \sigma_2, \cdots, \sigma_k$ of each word $\sigma\in w$ in the vocabulary of $\mathcal D$\;
$\mathcal D\gets \{\}$\;
\For{each sentence $w\in \mathcal D_0$}{
\For{each word $\sigma\in w$}{
\If{$rand()<p_r$}{
Replace $\sigma$ with selected synonym from $\{\sigma_1, \sigma_2, \cdots, \sigma_k\}$\;
}\Else{
\If{$rand()<p_d$}{
Replace $\sigma$ with $\left\langle \textbf{unk}\right\rangle$\;
}}
}
Obtain a new sentence $w'$, and add $w’$ to $\mathcal D$\;}
\textbf{return} $\mathcal D$\;
}
\caption{\alt{Data Augmentation for Transition Rule Extraction}}
\label{data_augmentation_alg}
\end{algorithm}
\vspace{0.5cm}
\xy{We illustrate the above data augmentation algorithm using the following example to generate a new sentence $w'$ from $\mathcal D_0$.}

\begin{example}
    Consider a
    sentence $w$ from the original training set $\mathcal D_0$,
    $w=$[`I', `really', `like', `this', `movie']. First, the word `like' is chosen to be replaced by one of its synonym `appreciate'.
    Next, the word `really' is dropped from the sentence, i.e. replaced by the unknown word $\langle \textbf{unk}\rangle$.
    Finally, we get a new sentence $w'=$[`I', `$\langle \textbf{unk}\rangle$', `appreciate', `this', `movie'] and 
    put it into the
    augmented dataset $\mathcal D$.

    Since the word `appreciate' may be an uncommon word in $\Sigma$, we can capture new transition information provided by RNNs. We can also capture the behavior of RNN when it reads an unknown word after the prefix [`I'].
\end{example}

\alt{Note that the role of \textit{data augmentation} in our extraction approach is different from that used in the training phase of RNNs. While data augmentation used in the training phase aims to improve the performance of RNNs, the goal of data augmentation in this work is to improve the WFA extraction precision. To this end, we use data augmentation in the testing phase to extract more transition dynamics to construct the abstract model.}
% \section{Experiments}
% \section{Extraction Precision Evaluation}
\section{Evaluation}
\label{sec:experiments}
In this section, we 
% apply 
evaluate our extraction approach on two 
natural language
datasets and demonstrate its performance 
on 
% in terms of 
precision and scalability.

\subsection{Datasets and RNNs} 
    We select two popular datasets for NLP tasks and train the target RNNs on them.
    %\xy{Rephrase this sentence, maybe split it into two sentences.}
    \begin{enumerate}
        \item The CogComp QC Dataset (abbrev. QC)~\cite{qc} contains news titles which are labeled with different topics.
        The dataset is divided into 
        a training set containing 20k samples and 
        a test set containing 8k samples.
        Each sample is labeled with one of seven categories.
        We train an LSTM model $\mathcal R$ on 
        the training set, which achieves an accuracy of $81\%$
        on the test set.

        \item The Jigsaw Toxic Comment Dataset (abbrev. Toxic)~\cite{toxic} contains comments from Wikipedia's talk page edits,
        with each comment labeled 
        toxic or not. 
        We select 25k non-toxic samples and toxic samples respectively,
        and divide them into the training set and test set in a ratio of four to one. 
        % Similarly, 
        We train an LSTM model which achieves $90\%$ accuracy on the test set.
    \end{enumerate}
%\xy{Use consistent tense for the dataset and model introduction }

    \paragraph{Metrics} 
    % For the purpose of 
    % simulate 
    % representing the behaviours of RNNs better,
    \alt{We use \textit{Consistency Rate (CR)} 
    \textit{and Jensen–Shannon Divergence (JSD)}
    as our evaluation metrics.
    For a sentence in the test set $w\in \mathcal{D}_{test}$,
    we use $\mathcal R(w)[i]$ and $\mathcal A(w)[i]$ to denote the prediction score on class $i$ of the RNNs and WFA, respectively.
    The \textit{Consistency Rate} measures the consistency of the output decision between the two models, which
    is formally defined as}
    \begin{equation}
    CR = \frac{|\{w\in D_{test}:\arg\max\limits_{i} \mathcal A (w)[i] = \arg\max\limits_{i} \mathcal R (w)[i]\}|}{|\mathcal D_{test}|}.        
    \end{equation}

    \alt{
    The \textit{Jensen–Shannon Divergence}~\cite{menendez1997jensen} measures the distance of two probability distributions, i.e., the outputs of WFA and RNN, 
    which is formally defined as    
    \begin{equation}
        JSD = \frac{1}{2}\sum_i \xy{(}\mathcal A (w)[i]\log(\frac{2\mathcal A (w)[i]}{\mathcal A (w)[i]+\mathcal R (w)[i]}) +\mathcal R (w)[i]\log(\frac{2\mathcal R (w)[i]}{\mathcal A (w)[i]+\mathcal R (w)[i]})\xy{)}.
    \end{equation}
  Note that the Consistency Rate measures the consistency of the classification decision between the WFA and the RNN, while Jensen–Shannon Divergence evaluates the similarity of the output probability distributions between the two models. 
    These two metrics 
    % correspond to 
    evaluate the consistency between 
    % different degrees of 
    the abstract model and 
    % abstraction of 
    RNN to a different degree.
    In this paper we mainly focus on the consistency of predicted labels, hence we apply Consistency Rate as our major measurement.}

    \subsection{Missing Rows Complementing}
    As discussed in Section \ref{subsec:missing_rows},
    % 3.2, 
    we take two approaches as baselines, the \textit{null filling}
    and the \textit{uniform filling}. 
    The extracted  WFA with these two approaches are denoted as $\mathcal A_0$
    and $\mathcal A_U$, respectively. 
    The WFA extracted by our \textit{empirical filling} approach is denoted as $\mathcal A_E$. 

    \begin{table}
        \centering
        \setlength{\tabcolsep}{3mm}{
        \begin{tabular}{|c|c|c|c|c|c|c|}
            \hline
            {Dataset} &
            
            \multicolumn{3}{|c|}{QC} & \multicolumn{3}{c|}{Toxic} \\
            \hline
            %\cline{2-5}
            Metric
             & \alt{CR($\uparrow$)} & \alt{JSD($\downarrow$)} & 
              Time(s) &
             \alt{CR($\uparrow$)} & \alt{JSD($\downarrow$)}
             & Time(s)  \\
             %& \alt{CR($\uparrow$)} & \alt{JSD($\downarrow$)} \\
            \hline
             $\mathcal{A}_0$ 
             & \alt{0.26} & \alt{0.25} & 47
             & \alt{0.57} & \alt{0.09} & 167 \\
            \hline
            $\mathcal{A}_U$ 
             & \alt{0.60} & \alt{0.21} & 56
             & \alt{0.86} & \alt{0.06} & 180 \\
            \hline
            $\mathcal{A}_E$
            & \alt{\textbf{0.80}} & \alt{\textbf{0.10}} & 70
            & \alt{\textbf{0.91}} & \alt{\textbf{0.02}} & 200 \\
            \hline
            \end{tabular}}
        \caption{Evaluation results of different filling approaches on missing rows.}
        \label{tab:missing_row_cr}
    \end{table}

    Table \ref{tab:missing_row_cr} shows the evaluation results of three rule filling approaches.
    We conduct the comparison experiments on QC and Toxic datasets 
    % mentioned above,
    and select the cluster number for state abstraction 
    % abstracting states 
    as $40$ and $20$ for the QC and Toxic datasets, respectively.
    
    The three rows labeled with the type of WFA show the 
    evaluation results of different approaches. 
    % consisting with their filling approach.
    For the $\mathcal A_0$ based on nul filling 
    , the WFA returns the weight of most sentences in $\mathcal D$ with $\vec 0$,
    which fails to provide sufficient information for prediction. 
    For the QC dataset, only a quarter of sentences in the test set are classified correctly.
    The second row 
    shows that the performance of $\mathcal A_U$ is better than $\mathcal A_0$.
    % but it's still unsatisfactory.
    The last row presents the evaluation result of $\mathcal A_E$, which fills in the missing rows by our approach.
    In this experiment, the hyper-parameter \textit{reference rate} is set as $\beta=0.3$.
    % \xy{Move all parameter settings to form a paragraph after the ``Datasets and RNNs models'' introduction}
    We can see that our empirical approach achieves significantly better accuracy, which is $20\%$ and $5\%$ higher than uniform filling on the two datasets, respectively.
    \alt{As for JSD, we can see that our empirical approach also outperforms the baselines notably %\xyc{significantly? change it to concrete number?}
    over both QC and Toxic datasets.}

    The columns labeled \textit{Time} show the execution time of the whole extraction workflow, from tracking transitions to evaluation on test set, but not include the training time of RNNs.
    % The time consumption of $\mathcal A_E$ is not much more than $\mathcal{A}_U$ and $\mathcal A_0$.
We can see that the extraction overhead of our approach ($\mathcal A_E$) is
about 
the same as 
$\mathcal{A}_U$ and $\mathcal A_0$.

    \subsection{Context-Awareness Enhancement}
    In this experiment, we leverage the context-awareness enhanced matrices when constructing the WFA.
    We adopt 
    % take 
    the same configuration on cluster numbers $n$ as the comparison experiments above, i.e.
    $n=40$ and $n=20$. 
\xy{The experiment results
    are summarized in Table \ref{context_cr}.
    The columns titled \textit{Config.}
    indicate
    if the extracted WFA leverage context-awareness matrices.}
    We also take the WFA with different filling approaches, the uniform filling and empirical filling, into comparison.
    % \zm{(which are far better than null filling)}
    {Experiments on null filling is omitted due to limited precision.}

    \begin{table}
        \centering
        {
        \begin{tabular}{|c|c|c|c|c|c|c|c|}
            \hline
            \multirow{2}{*}{\alt{Rule}}& \multirow{2}{*}{\alt{Config.}} & \multicolumn{3}{c|}{QC} & \multicolumn{3}{c|}{Toxic}\\
            \cline{3-8}
            & & \alt{CR($\uparrow$)} & \alt{JSD($\downarrow$)} & 
              Time(s) & \alt{CR($\uparrow$)} & \alt{JSD($\downarrow$)} & Time(s)\\
              \hline
             %\cline{2-8}
            \multirow{2}{*}{$\mathcal{A}_U$} 
            & None 
            & \alt{0.60} & \alt{\textbf{0.21}} & 56
            & \alt{0.86} & \alt{\textbf{0.06}} & 180 \\
            \cline{2-8}
            & Context 
            & \alt{\textbf{0.71}} & \alt{{0.22}} & 64
            & \alt{\textbf{0.89}} & \alt{{\textbf{0.06}}} & 191\\
            \hline
            \multirow{2}{*}{$\mathcal{A}_E$} 
            & None
            & \alt{{0.80}} & \alt{\textbf{0.10}} & 70
            & \alt{{0.91}} & \alt{\textbf{0.02}} & 200\\
            \cline{2-8}
            & Context 
            & \alt{\textbf{0.82}} & \alt{{0.13}} & 78
            & \alt{\textbf{0.92}} & \alt{{0.03}} & 211\\
            \hline
            \end{tabular}}
            \caption{Evaluation results of with and without context-awareness enhancement.}
            \label{context_cr}
    \end{table}

    For the QC dataset, we 
    % take 
    set the \textit{static probability} as $\alpha=0.4$. 
    The consistency rate of WFA $\mathcal A_U$ 
    improves 
    $11\%$
    with the context-awareness enhancement,
    and $\mathcal A_E$ improves $2\%$. As for the Toxic dataset, we take $\alpha=0.2$
    and the consistency rate of the two WFA improves $3\%$ and $1\%$ respectively.
   \alt{This shows that the WFA with context-awareness enhancement remains more context information from the prefixes of sentences, making it simulate RNNs' classification decision better.}
    \alt{However, the WFA equipped with context-awareness enhancement exhibit larger JSD, which is caused by the fact that context-awareness enhancement reduces the transition magnitude, since larger $\alpha$ leads to higher probability on remaining in the current state. This reveals a trade-off between the abstraction precision evaluated by decision label consistency and prediction score consistency.}

    Still, the context-awareness enhancement processing costs little time, since we only calculate the adjusting formula (\ref{context}) for each
    $E_\sigma$ in $E$. 
    The additional
    extra time consumption is 8s for the QC dataset and 11s for the Toxic dataset.

    \subsection{Data Augmentation}
    Finally, we evaluate the WFA extracted with transition behaviours from augmented data.
    Note that the two experiments above are based on the primitive dataset $\mathcal D_0$. %\xyc{Maybe remove "training" to avoid confusion on whether it is data augumentation during training phase}
    In this experiment, we leverage the data augmentation tactics to generate the augmented training set $\mathcal D$,
    and extract WFA with data samples from $\mathcal D$.
    In order to get best performance, we build WFA with context-awareness enhanced matrices.

    \begin{table}
        \centering

        {
        \begin{tabular}{|c|c|c|c|c|c|c|c|}
            \hline
            \multirow{2}{*}{\alt{Rule}}& \multirow{2}{*}{\alt{Data}} & \multicolumn{3}{c|}{QC} & \multicolumn{3}{c|}{Toxic}\\
            \cline{3-8}
            & & \alt{CR($\uparrow$)} & \alt{JSD($\downarrow$)} & 
              Time(s) & \alt{CR($\uparrow$)} & \alt{JSD($\downarrow$)} & Time(s)\\
              \hline
             %\cline{2-8}
            \multirow{2}{*}{$\mathcal{A}_U$} 
            & $\mathcal D_0$  
            & \alt{{0.71}} & \alt{{0.22}} & 64
            & \alt{{0.89}} & \alt{{{0.06}}} & 191\\
            \cline{2-8}
            & $\mathcal D$  
            & \alt{\textbf{0.76}} & \alt{{\textbf{0.18}}} & 81
            & \alt{\textbf{0.91}} & \alt{{\textbf{0.05}}} & 295\\
            \hline
            \multirow{2}{*}{$\mathcal{A}_E$} 
            & $\mathcal D_0$ 
            & \alt{{0.82}} & \alt{{0.13}} & 78
            & \alt{{0.92}} & \alt{{0.03}} & 211\\
            \cline{2-8}
            & $\mathcal D$ 
            & \alt{\textbf{0.84}} & \alt{{\textbf{0.12}}} & 85
            & \alt{\textbf{0.94}} & \alt{{\textbf{0.02}}} & 315\\
            \hline
            \end{tabular}
            }
    \caption{Evaluation results of with and without data augmentation.}
    \label{aug cr}
    \end{table}

    Table \ref{aug cr} shows the results of consistency rate of WFA extracted with and without augmented data.
    The rows labeled $\mathcal D_0$ show the results of WFA that are extracted with the primitive training set, 
    and the result from the augmented data is shown in rows labeled $\mathcal D$.
    With more transition behaviours tracked, the WFA extracted with $\mathcal D$ 
    % performs 
    demonstrates better precision.
    \alt{Specifically, the WFA extracted with
    both empirical filling and context-awareness enhancement achieves a further $2\%$ increase in consistency rate on the two datasets.
    In addition, the extractions with augmented data also exhibit better JSD.}

    To summarize, by using our transition rule extraction approach, 
    % with three optimazation approaches,
    the consistency rate of extracted WFA on the QC dataset and the Toxic dataset achieves $84\%$ and $94\%$, respectively. 
    Taking the primitive extraction algorithm with uniform filling as baseline, 
    of which experimental results 
    in terms of CR 
    are $60\%$ and $86\%$,
    our 
    % three optimazation 
    approach achieves an improvement of $22\%$ and $8\%$ in consistency rate.
    \alt{Regarding the Jensen–Shannon Divergence, though there is a little %\xyc{brittle?}
    drop made by the context-awareness enhancement, our approach still outperforms the baseline methods 
    significantly. %\xyc{for all cases?}.
    Taking uniform filling for comparison, our 
    overall approach improves the JSD from 0.21 to 0.12 on QC dataset and 0.06 to 0.02 on Toxic dataset.}
    For the time complexity, the time consumption of our approach 
    increases 
    from $56s$ to $81s$ on QC dataset,
    and from $180s$ to $315s$ on Toxic dataset.
    There is no significant time cost increase when 
    adopting our approach for complicated natural language tasks.
    We can conclude that our 
    transition rule extraction approach 
    makes better approximation of RNNs,
    and 
    is also efficient enough 
    to be applied to practical applications for large-scale 
    natural language tasks. 

%\subsection{\alt{More ablation studies}}
% \subsection{\alt{More experimental analysis}}
\xy{\subsection{Parameter Effect Evaluation}}

\xy{In this section, we conduct 
% comprehensive 
experiments to evaluate the impact of the  hyper-parameters on the validity of 
% analysis on 
our 
% proposed 
extraction approach, including the 
% selection of two hyper-parameters, 
reference rate $\beta$, static probability $\alpha$, and the number of cluster $K$.}

\paragraph{\alt{Reference rate $\beta$}}
% \xy{We perform }
\xy{We first evaluate the impact of the \textit{reference rate}. 
To this end, we set $\beta$ to different values from $\{0.1,0.3,0.5,0.7,0.9\}$. Meanwhile, we set $\alpha$ to a fixed value 0. 
The results are shown in Figure~\ref{fig:beta}, where we take the {uniform filling} as baseline (the dotted lines). 
% In most cases particularly $\beta\le 0.5$, 
We observe that our filling method outperforms uniform filling for a large range of parameter values (less than 0.7), under both CR and JSD metrics. 
%This verifies the generalization of our proposed which does not heavily rely on the hyper-parameter $\beta$.}
A relatively small $\beta$ (e.g., less than 0.5) leads to better extraction precision.
}
%\xy{Todo: revise the result analysis: according to the evaluation results, setting beta to a value e.g., less than 0.7 leads to better performance, which also shows that our approach is insensitive to this parameter choice.}

\begin{figure}[t]
    \centering
    \begin{tabular}{cc}
        \includegraphics[width=0.45\textwidth]{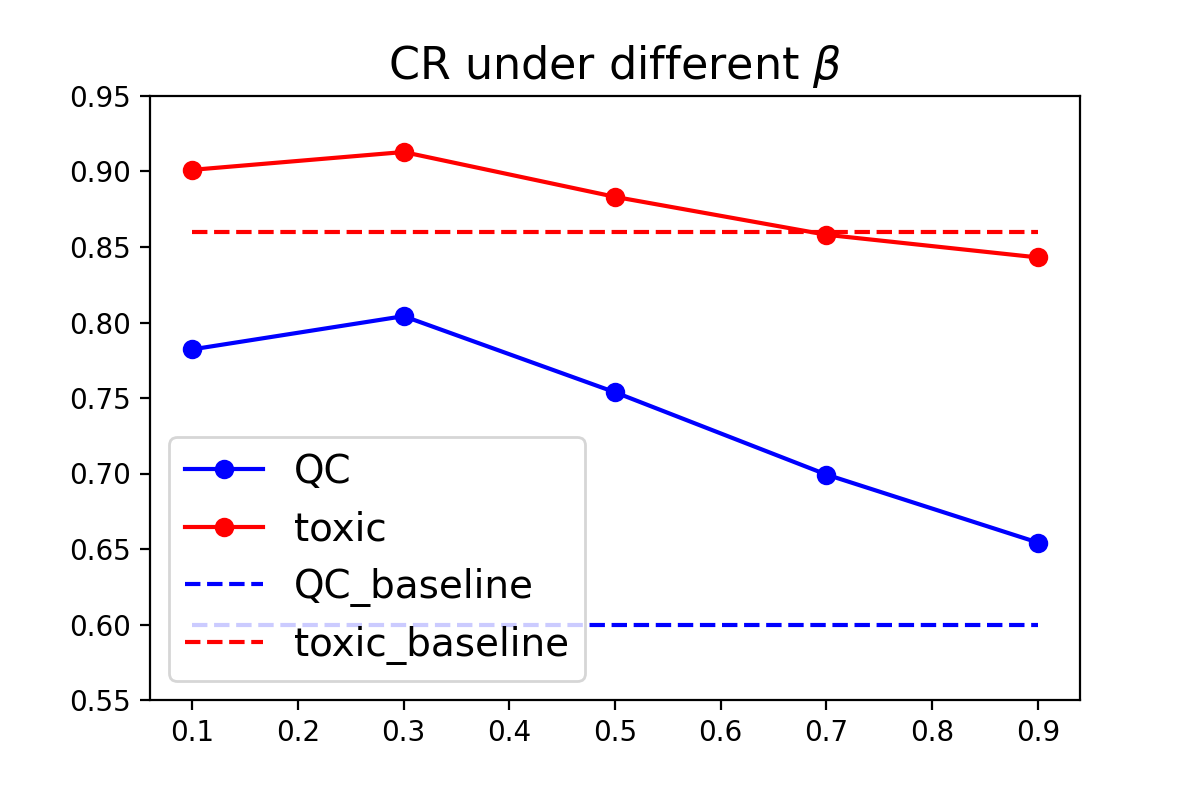}
         &  \includegraphics[width=0.45\textwidth]{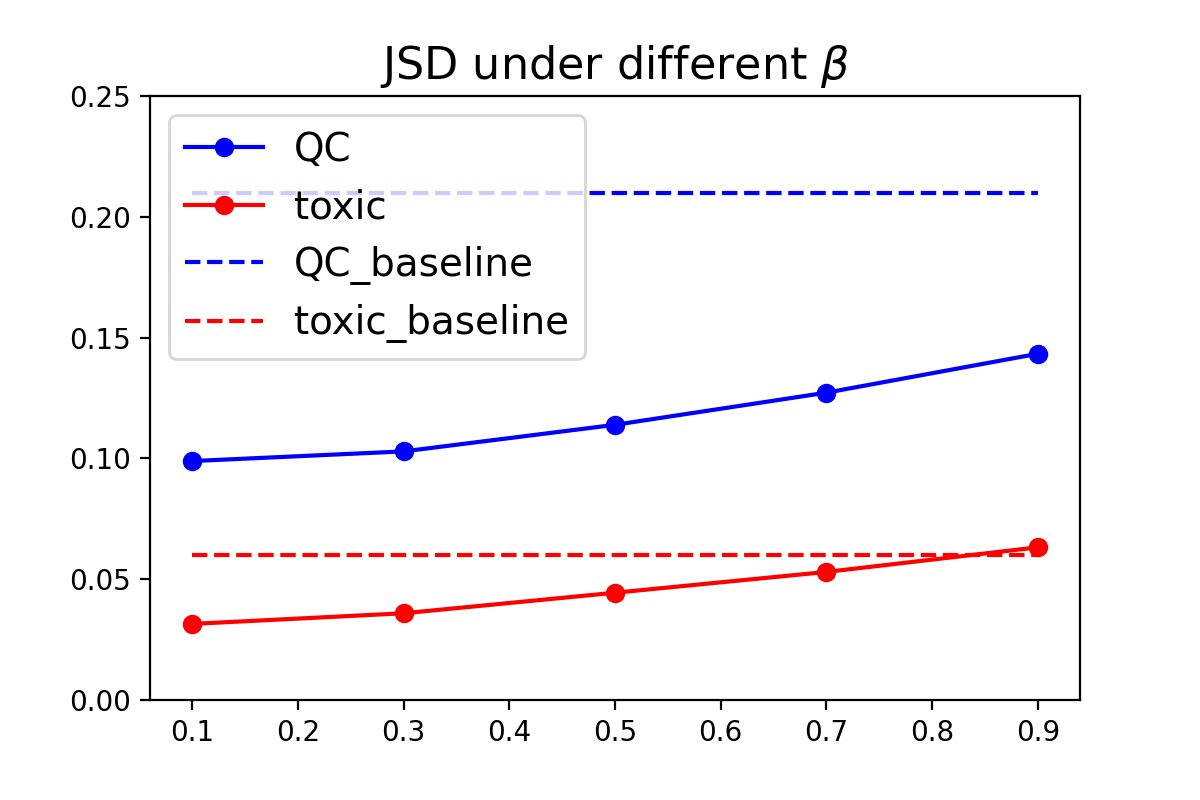} \\

    \end{tabular}
        \caption{\alt{CR and JSD on the two datasets under different $\beta$}.}
    \label{fig:beta}
\end{figure}

\paragraph{\alt{Static probability $\alpha$}}
\alt{Similarly, we conduct an experiment to evaluate the impact of different \textit{static probability} values $\alpha\in\{0,0.2,0.4,0.6,0.8,1.0\}$ \zmw{on the performance of our approach}. 
\xy{We set $\beta$ to 0.3 based on the above evaluation results}. 
%\xy{Todo: revise accordingly similar to the last paragraph.}
The results are illustrated in Figure~\ref{fig:alpha}.
%Our method still performs the {uniform filling} as baseline (the dotted lines) \zmw{for $\alpha\le 0.4$}. 
%\xy{Not always, revise based on the figure results.}
\xy{Compared with the case of $\alpha=0$ where we do not apply the context-awareness enhancement, it leads to improvements on the CRs when setting $\alpha$ to values from $\{0.2,0.4\}$. Meanwhile, as discussed before, context-awareness enhancement reduces the transition scale of WFA, which leads to performance degradation in terms of JSD.
This reveals a trade-off between CR and JSD among different selections on $\alpha$.}
% Therefore, the selection of $\alpha$ is relatively more intensive than $\beta$, and 
Based on the results, we suggest setting $\alpha$ to a small positive value (less than 0.4).
}
% provide intuition on how to select a proper alpha; point out the insensitive to a certain value range of alpha

\begin{figure}[t]
    \centering
    \begin{tabular}{cc}
        \includegraphics[width=0.45\textwidth]{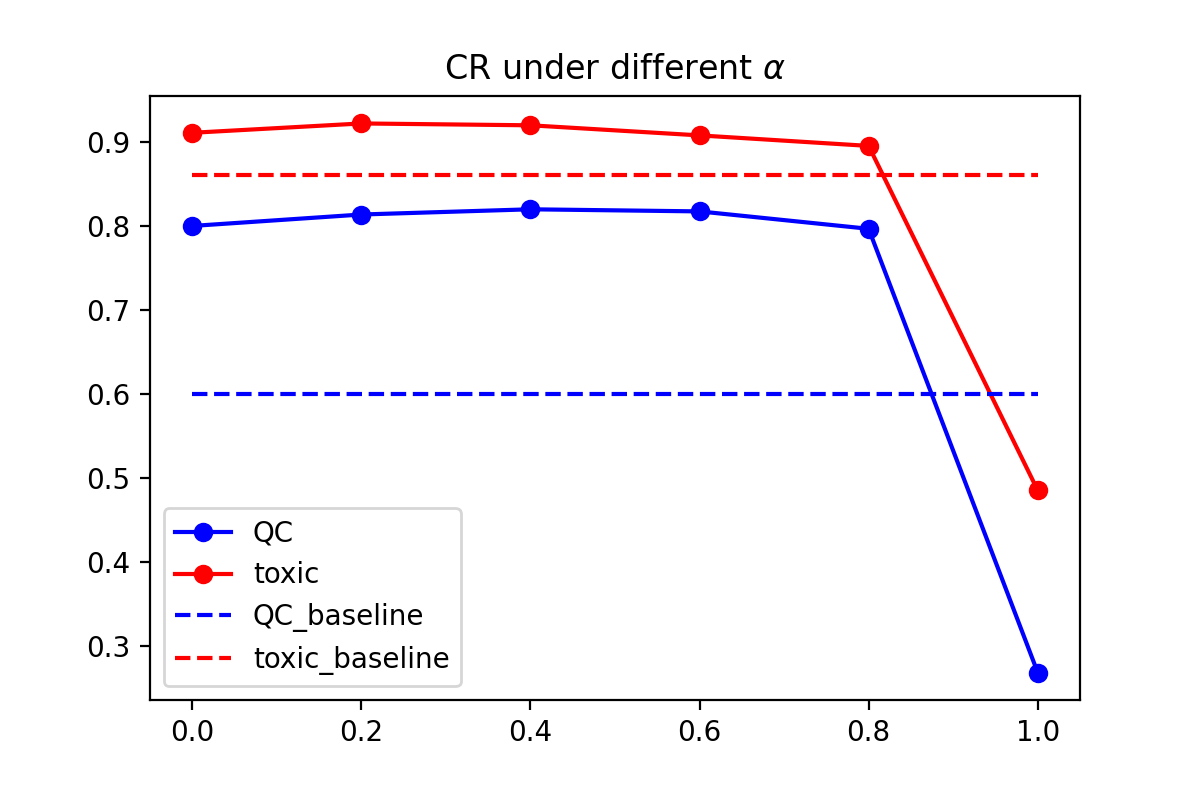}
         &  \includegraphics[width=0.45\textwidth]{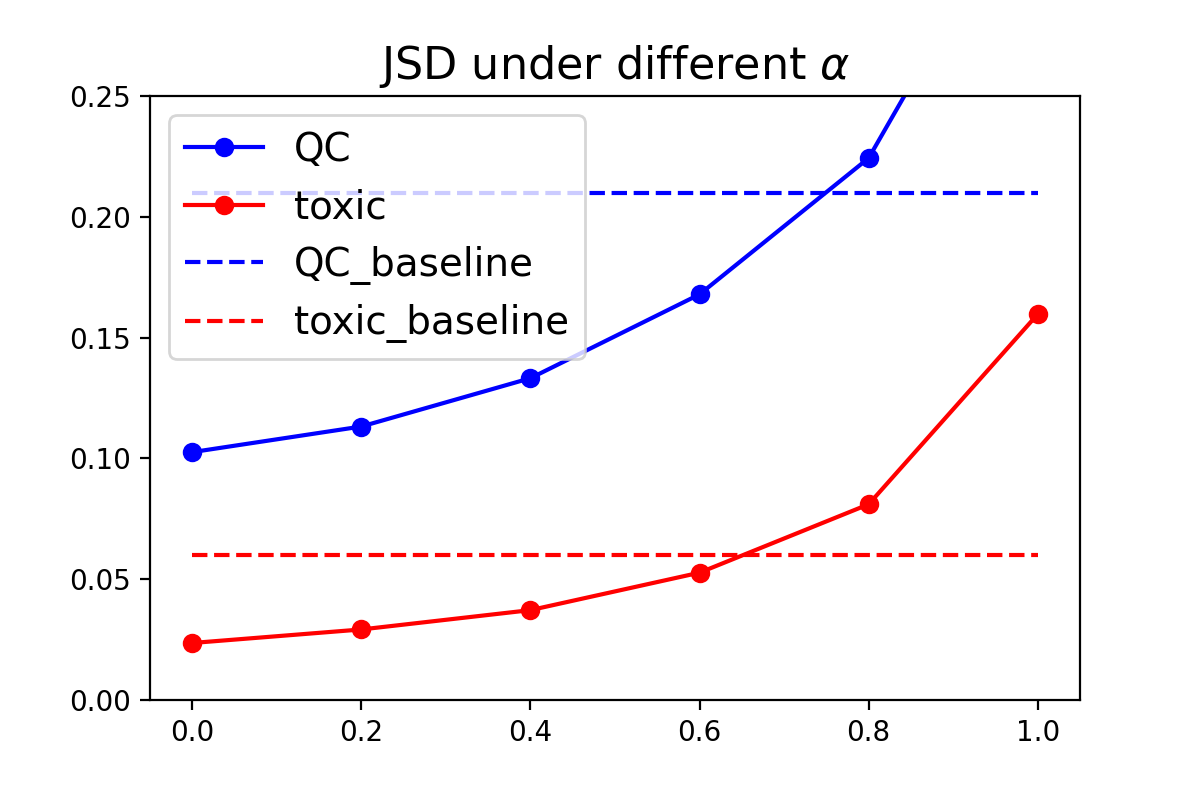} \\

    \end{tabular}
        \caption{\alt{CR and JSD on the two datasets under different $\alpha$}.}
    \label{fig:alpha}
\end{figure}

\paragraph{\alt{Cluster number $K$}}
\xy{Finally, we 
evaluate the impact of 
cluster number $K\in\{10,20,30,40,50\}$ on the performance of our approach.
The results are shown in Figure~\ref{fig:K}, where our approach is denoted by $A_E$, and the uniform filling and null filling are denoted as $A_U, A_0$ respectively. We can see that our approach outperforms the baselines in all cases. Note that as $K$ increases from 10 to 50, the performance of $A_U$ and $A_0$ consistently decreases, which is caused by the transition sparsity problem we have identified. 
% However, the performance of our method still remains better even $K$ is large, which 
\zmw{The evaluation results demonstrate the robustness  of our method against the cluster number $K$}.}
%\xy{Todo: Revise the anlysis: keep the decreasing peformance of Au and A0 due to sparsity. Point out the robustness of our approach with K.}

% \xy{'Provide intuition and guidacne on how to set K in experiments':}

% \zmw{I failed to provide this}

\begin{figure}[t]
    \centering
    \begin{tabular}{cc}
        \includegraphics[width=0.45\textwidth]{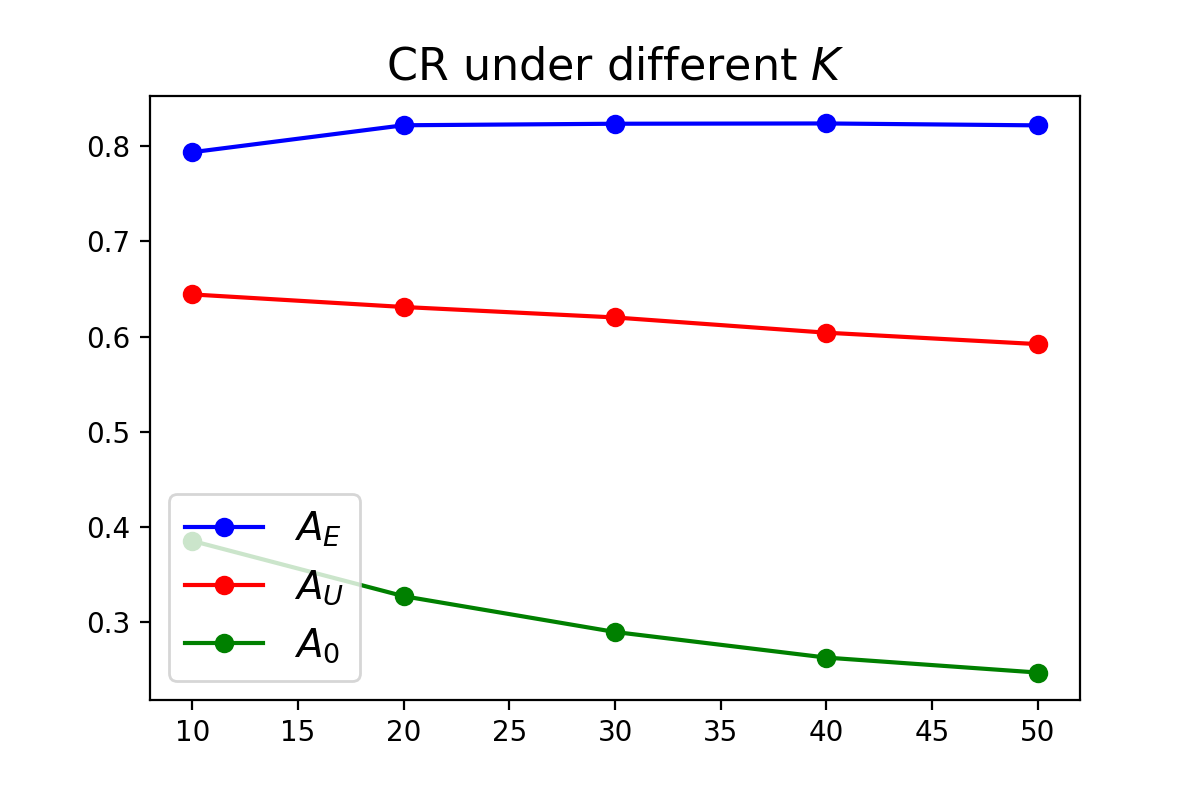}
         &  \includegraphics[width=0.45\textwidth]{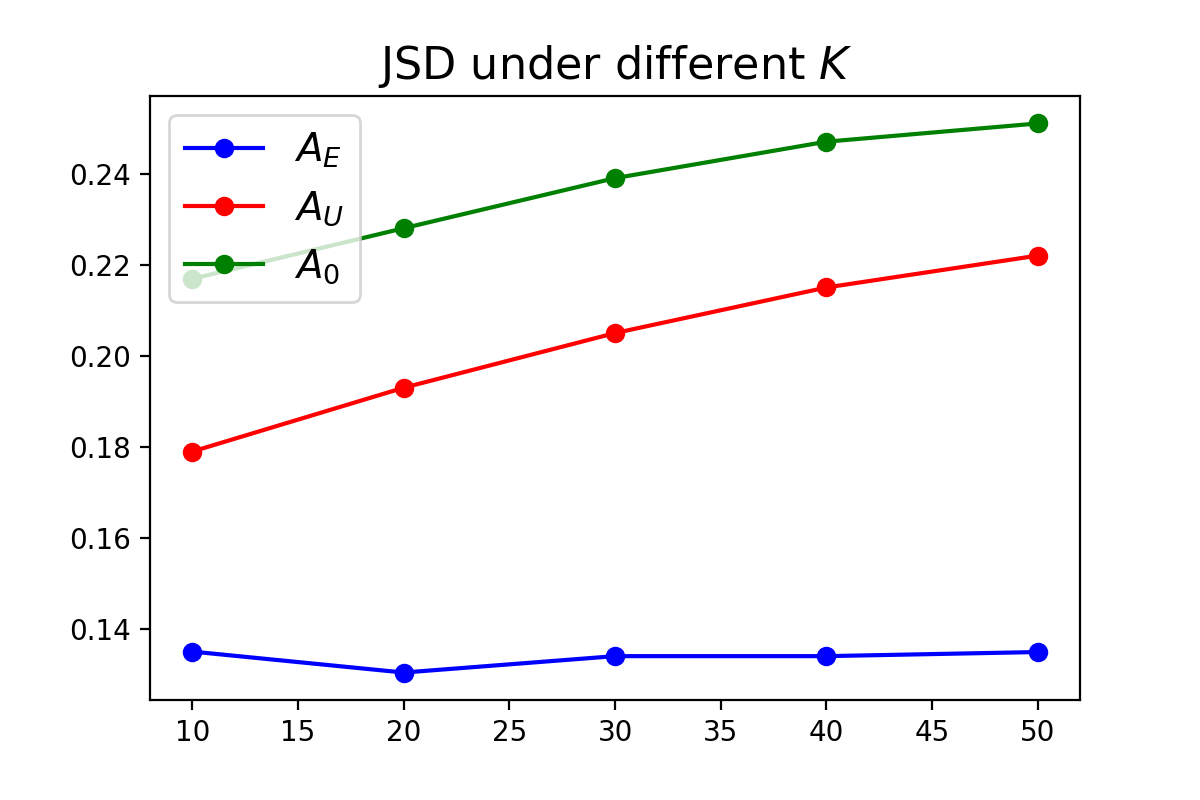} \\
        \alt{(a) CR on QC dataset} & \alt{(b) JSD on QC dataset} \\
        \includegraphics[width=0.45\textwidth]{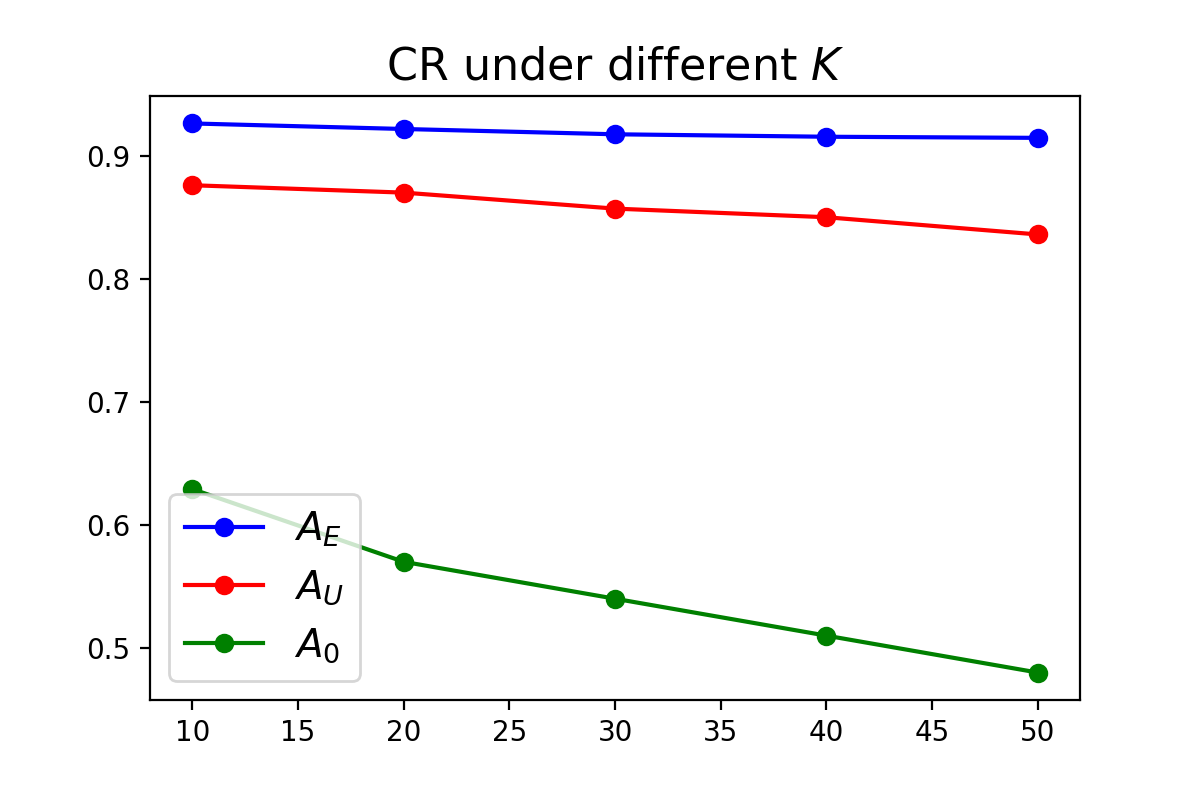}
         &  \includegraphics[width=0.45\textwidth]{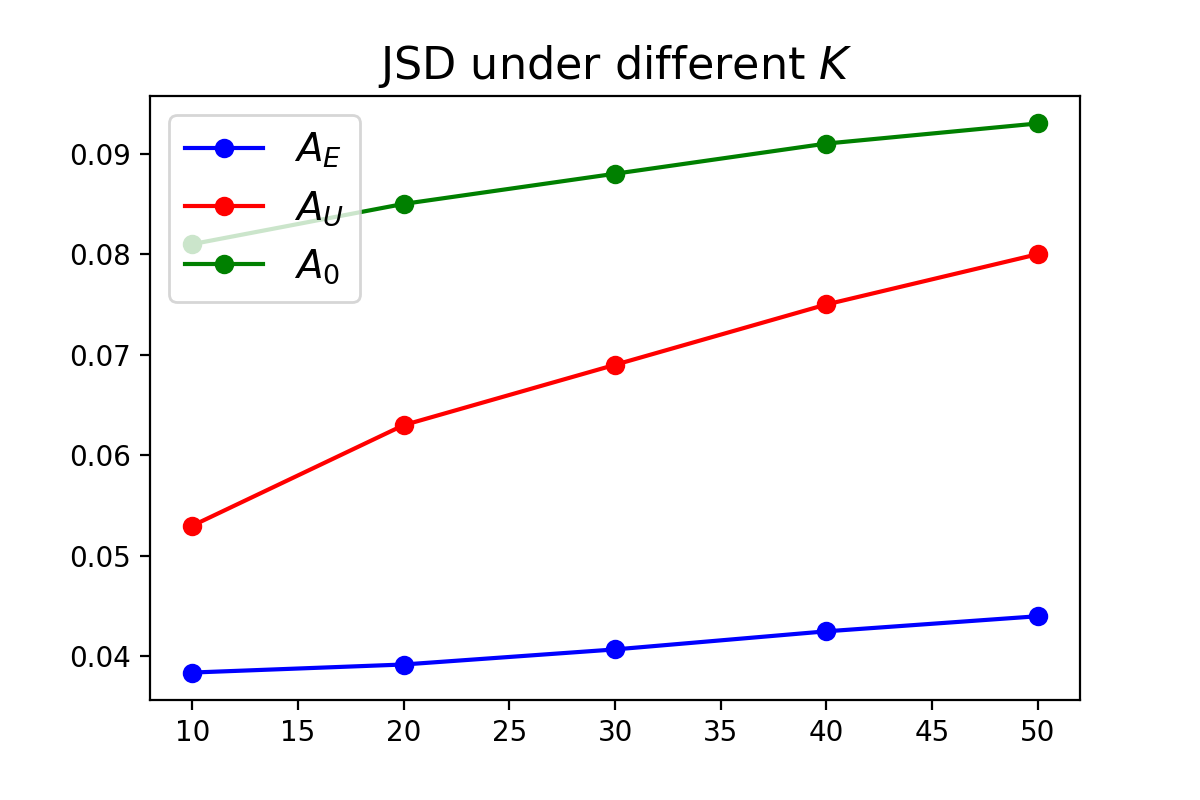} \\
        \alt{(c) CR on Toxic dataset} & \alt{(d) JSD on Toxic dataset} \\
    \end{tabular}
    \caption{\alt{Overall comparison under different $K$}.}
    \label{fig:K}
\end{figure}

\section{Weighted Automata-based Explanation of RNNs}
\label{sec:expla}
In this section,
we propose a novel explanation framework of RNNs for natural language tasks based on the extracted WFA.
In the explanation framework, we consider using  transition matrix\footnote{In this section, we focus on the final extracted transition matrix $\hat E_\sigma$, and abuse the notation $E_\sigma$ for the sake of simplicity.} $E_\sigma$
for  each word $\sigma$ as its word embedding, 
named as
% We name this 
% embedding as 
Transition Matrix Embeddings (TME).
In the following, we 
% initially 
first introduce
% discuss the explanation 
TME and explain its difference from traditional pretrained word embeddings.
Next, we present a global explanation and contrastive explanation method based on TME to interpret the behaviours of RNNs:
% demonstrate 
% two applications of TME on understanding the behaviours of RNNs:
(1) \textit{global explanation:} we use TME to calculate the word-wise attribution of RNN's decisions, 
(2) \textit{contrastive explanation:} we compare TME with the conventional word embedding method to analyze the task-oriented word semantics learned by RNNs.
% relations between words.
We also reveal two intriguing properties of RNNs identified by the contrastive method,
% with these analyses, 
and validate the effectiveness of the proposed embedding and explanation framework for RNNs by applying it to pretraining and adversarial example generation.

\subsection{Transition Matrix as Word Embeddings}
\begin{figure}[t]
    \centering
    \includegraphics[width=0.9\textwidth]{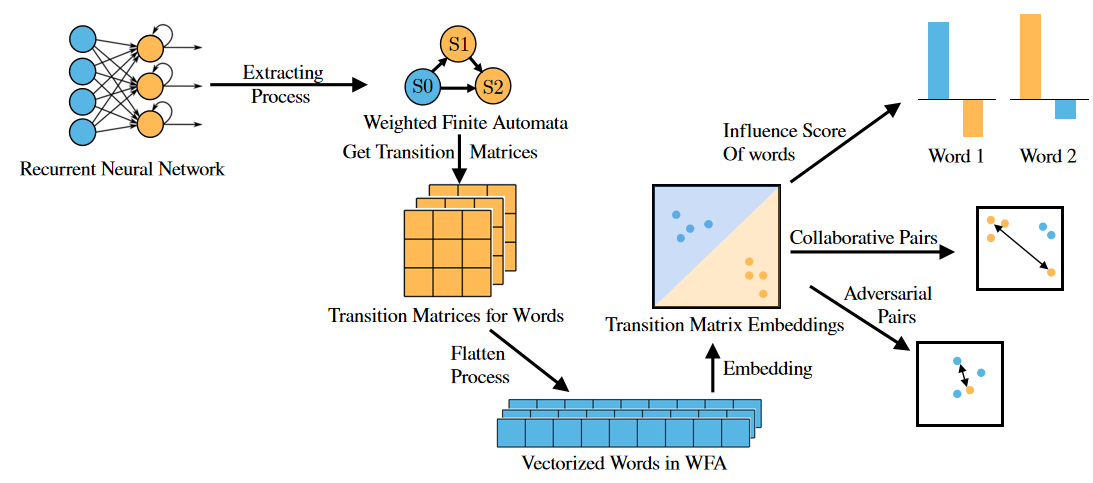}
    \caption{WFA-based explanation framework.}
    \label{fig:expla}
\end{figure}
Suppose the extracted WFA $\mathcal A$ from RNN $\mathcal{R}$ has $n=|\hat S|$ words.
For each word $\sigma$ and its corresponding transition matrix $E_\sigma\in \mathbb R^{n\times n}$ in $\mathcal A$,
recall that the $(i,j)$-th element of $E_\sigma$ represents the transition probability of $\mathcal A$ from $s_i$ to $s_j$ after reading word $\sigma$, 
which is an approximation of the transition probability of $\mathcal{R}$ between these two states.
Therefore, if two words share similar transition matrices, 
they trigger similar behaviours of RNN $\mathcal R$,
% in the input sequence,
and hence represent similar semantics from the RNN $\mathcal R$'s perspective for the current task.

This observation motivates us using the transition matrices to craft word embeddings.
In order to obtain the task-oriented embedding vector,
% match the shape of word embedding for a word which is a vector, 
we flatten the transition matrix $E_\sigma\in\mathbb R^{n\times n}$ into a  vector $\boldsymbol{e}_\sigma$ 
of $N=n^2$ dimension:

\begin{equation}
    \boldsymbol e_\sigma [j+n\cdot (i-1)] = E_\sigma [i,j],\quad \text{for} \ 1\le i,j\le n,
\end{equation}
% and name the crafted  vectors 
which we refer to
as Transition Matrix Embeddings (TME).
% It is also noteworthy 
%Note that this embedding method 
% like $\boldsymbol e_\sigma$ 
%possesses a normalization property, as each row of $E_\sigma$ sums up to 1.
%we obtain $\| \boldsymbol{e}_\sigma\|_1 = n$ for every $\sigma$.
%\xy{Revise the final sentence. What is meant to indicate?}

Note that 
TME are fundamentally different from the traditional pretrained word embeddings,\textit{ e.g.} Word2vec~\cite{mikolov2013efficient}, Gloves~\cite{pennington2014glove}.
The TME characterize
% are based on the 
transition behaviours of RNNs when processing each word, which are task oriented (\textit{e.g}., text classification), while pretrained embeddings are word-semantic oriented.
\alt{Therefore, for two words $\sigma_1$ and $\sigma_2$, they may share similar TME $\boldsymbol e_{\sigma_1}$, $\boldsymbol e_{\sigma_2}$, yet represent different semantics and hence their embedding differences in pretrained embeddings may be large}.
% far away. 
We detail 
such cases 
in Section~\ref{relation}.
Further, the applications of TME are different from the general pretrained embeddings.
The extracted TME can be seen as a global explanation of the source RNN $\mathcal R$ (see Section~\ref{6.2} for details), 
which aids us for understanding the decision logic of the source RNN.

%We can also use the extracted TME as the embeddings initialization for similar and down-stream tasks.
\subsection{Word-Wise Attribution with TME}
%\subsection{Understanding RNNs with Influence Score}
\label{6.2}
\begin{table}[t]
    \centering
    \resizebox{\textwidth}{!}{
    \begin{tabular}[width=0.9\textwidth]{|c|c|l|}
    \hline
       Label & Category & Top-10 Influencial Words\\ \hline
\multirow{2}{*}{0}&\multirow{2}{*}{Sport} &players, lockout, rangers, sox, knicks,\\ & & basketball, coach, bruins, champions, djokovic \\ \hline
\multirow{2}{*}{1}&\multirow{2}{*}{World} &libyan, pakistan, yemen, sudan, gaddafi,\\ & & syrian, egypt, mubarak, syria, afghans\\ \hline
\multirow{2}{*}{2}&\multirow{2}{*}{US} &florida, county, wildfire, layoffs, massachusetts,\\& & mid-atlantic, wildfires, cpsc, firefighters, blagojevich\\ \hline
\multirow{2}{*}{3}&\multirow{2}{*}{Business} &stocks, dollar, consumer, goldman, mortgage,\\ & & wall, companies, rosneft, s\&p, sec\\ \hline
\multirow{2}{*}{4}&\multirow{2}{*}{Health} &disease, crackers, cancer, asthma, patients,\\& & exercise, prevention, symptoms, therapy, obesity\\ \hline
\multirow{2}{*}{5}&\multirow{2}{*}{Entertainment}&idol, cannes, arthur, gotti, beaver,\\& & mccreery, mariah, lohan, baldwin, diana\\ \hline
\multirow{2}{*}{6}&\multirow{2}{*}{Sci\_tech}&3ds, playstation, icloud, software, gmail,\\& & windows, tablets, linkedin, tablet, climate\\ \hline
    \end{tabular}
    }
    \caption{Top-10 Influential Words for 7 classes in the QC news Dataset~\cite{qc}}
    \label{tab:is}
\end{table}
We introduce a global explanation method based on
% the first application of
TME for analyzing the decision attribution of the source RNN $\mathcal{R}$. 
% Our approach provides a global explanation of RNN $\mathcal{R}$ by 
We investigate the impact of each word $\sigma$ on the decision of $\mathcal{R}$ based on its TME $\boldsymbol{e}_\sigma$.
Recall that each abstract state $\hat s$ has an explicit semantic 
% that is 
represented by its center $\rho(\hat s)$,
the probability distribution of labels.
Therefore, each transition between two states, such as $\hat s_1\to \hat s_2$,
can be interpreted as a shift in the probability distribution of labels, $\rho(\hat s_1) \to \rho (\hat s_2)$.
By multiplying the transition probability between states, 
which is saved in  $\boldsymbol e_\sigma$, 
we can calculate the average variation of prediction scores among labels after $\mathcal R$ reading word $\sigma$.
More formally,  the variation contributed by the abstract transition $(\hat s_i, \sigma,\hat s_j)$ is given by $\boldsymbol e_\sigma[j+n\cdot (i-1)]\times(\rho(\hat s_j) - \rho(\hat s_i))$.

Additionally, we take into account the 
% unequal 
uneven
significance among all abstract states, where states that appear more frequently should 
be assigned larger weight.
% receive greater consideration. 
To reflect this, we calculate the frequency of each abstract state $\hat s$ as $u(\hat s)$ and 
% include it 
incorporate it into the computation of influence score as a weight.
% in the calculation of influence. 
% To summarize, 
Formally, the \textit{Influence Score} (IS) of word $\sigma$ is formulated as
\begin{equation}
    \text{IS}(\sigma) = \sum_{i=1}^n u(\hat s_i)\{ \sum_{j=1}^n \boldsymbol e_\sigma[j+n\cdot (i-1)]\times(\rho(\hat s_j) - \rho(\hat s_i))\}.
\end{equation} 

For class $i$, the $i$-th element of influence score for word $\sigma$ represents how this word impacts the decision of RNN on this class.
%This score can be intuitively interpreted as the influence on the prediction result on each class: the higher the IS of an word on a certain class is, the more convinced the RNN is to believe the sequence belongs to that class when it reads this word.
%As this interpretation has shown, 
% Therefore, we can study which words are learnt to be influential by the source RNN by investigating
%  the words have high IS on each class.
Therefore, we can investigate the influential words for the source RNN's decisions by sorting the input words in descending order of IS.
% investigating
 % the words have high IS on each class.
%to find out.  This can be an interpretation as how RNN decides the class of a sequence during reading process. 
To demonstrate the effectiveness of the proposed influence analysis method, 
% To this end, 
we compute the IS of all words in the vocabulary of the QC news dataset, and show the top-10 influential words for each class  in Table~\ref{tab:is}.
\alt{Due to the presence of inappropriate language in the Toxic dataset 
% which is not appropriate for academic articles, 
we exclude the experiment results on this dataset.}
From Table~\ref{tab:is} we can see that 
%it is obvious to see that 
for each category, its most influential words are highly correlated to that domain.
This confirms that the proposed influence score based on TME can indeed identify the input features (words) that RNN $\mathcal R$ relies on to make decisions on each class.
% This confirms what RNN $\mathcal R$ learns which words are important in classifying the sentence into a certain class.

\begin{figure}[t]
    \centering
    \begin{tabular}{cc}
    \includegraphics[width=0.4\textwidth]{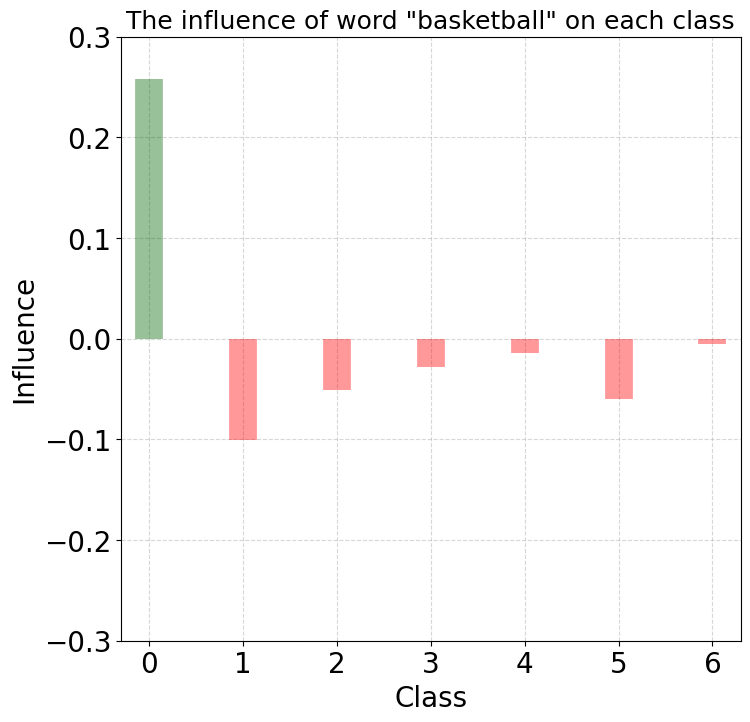} &
    \includegraphics[width=0.4\textwidth]{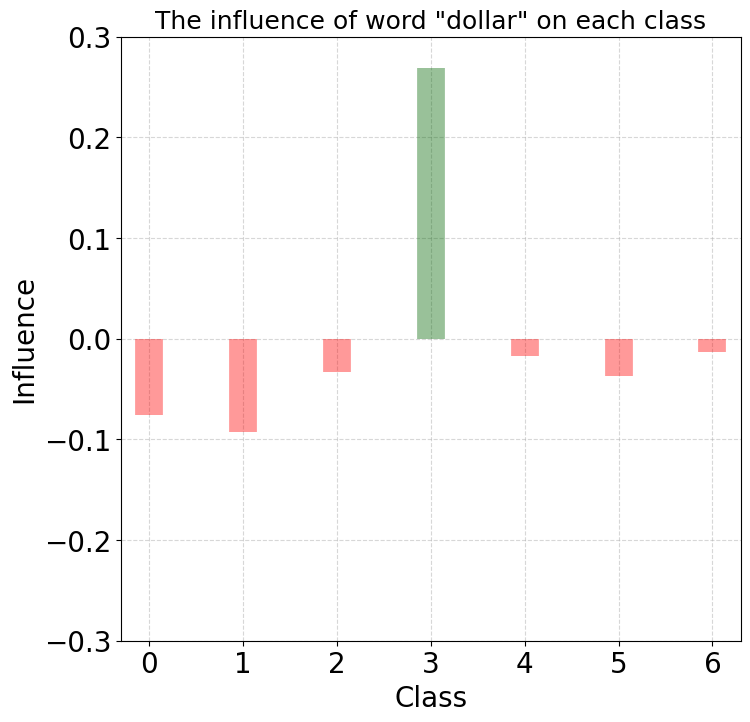} \\
    (a) Basketball & (b) Dollar\\
    \includegraphics[width=0.4\textwidth]{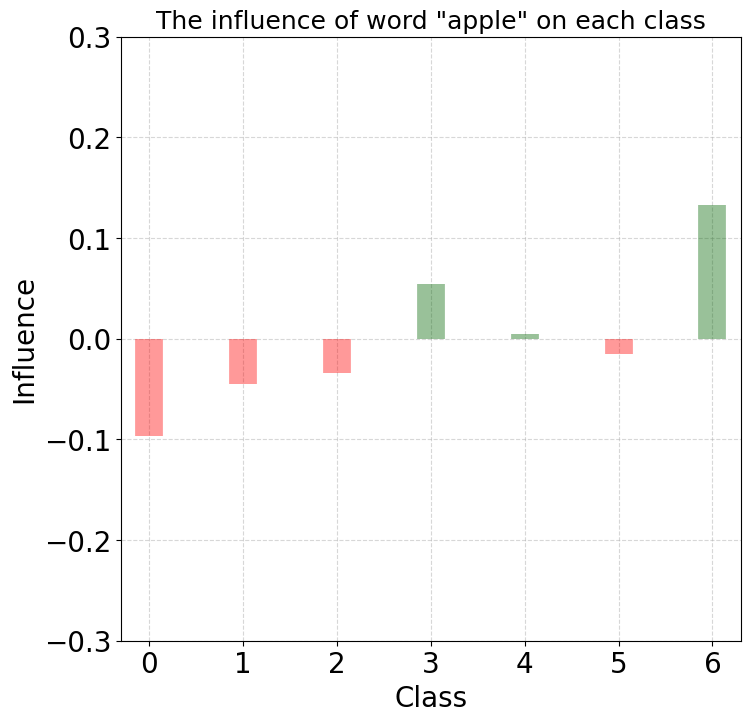} &
    \includegraphics[width=0.4\textwidth]{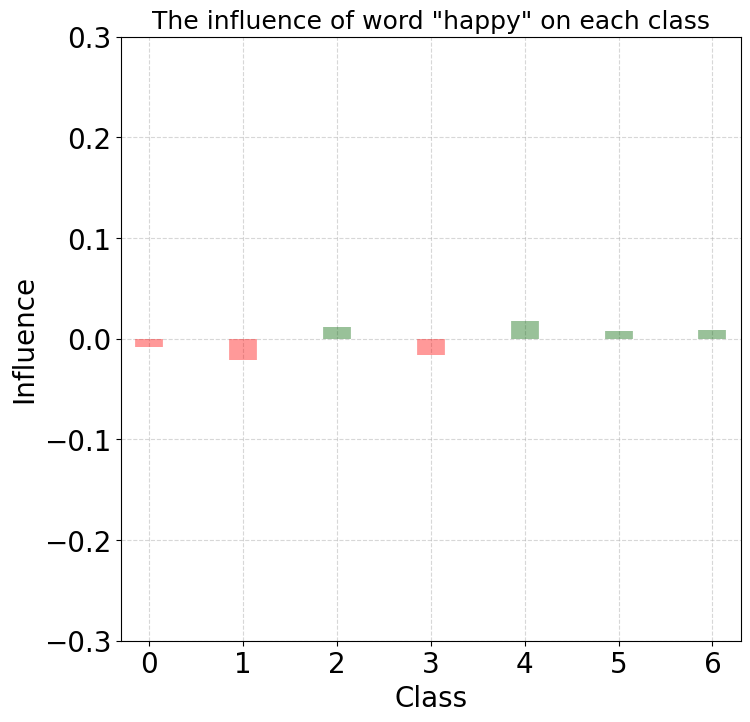} \\
    (c) Apple & (d) Happy

    \end{tabular}
    \caption{Influence Scores (ISs) visualization for some words.}
    \label{fig:is}
\end{figure}

We can also use 
the TME 
to characterize the relative importance of an 
% decision logic for 
individual word for different labels.
For instance, we show the influence scores of the words  ``Basketball'', ``Dollar'', ``Apple'', and ``Happy''  in Figure~\ref{fig:is}.
%and more visualization of influence scores can be found in Appendix ().
%To make our assertion more convinced, we show influence score of some words in Figure \ref{fig:is}. 
As shown in  Figure~\ref{fig:is}, the ISs of ``Basketball'' and ``Dollar'' demonstrate that 
they 
lead to
high prediction tendency on class ``Sport'' and ``Business'', 
respectively, which is strongly correlated to their semantic domains.
%belongs to the vocabulary of sports and business, respectively; correspondingly, as shown in figure, the IS score on the corresponding category is notably high; this implies that, when RNN reads these words, its decision is changed to believe the sequence belongs to this category. 
In contrast, the word ``Apple'' shows high influence score on class ``Business'' and ``Sci\_tech'', which is consistent with the intuition that 
this word 
% as a company 
is active in 
both business and technology news.
%As far as words that appears both in multiple fields are concerned, such as
%`apple' (both meaningful in business and technology), we observe that IS on both category is positive; as for those words that have no significant tendency on any category, for example, 'happy', the influence is naturally close to zero. As all above phenomenons fits well with the discussions we mentioned above, this can be seen as another persuasive evidence for our interpretation above.
% And finally, the word `Happy' which is of 
%  influence on classification performs uniform low IS on each classes.
Finally, the word ``Happy'', which has no particular
 influence on any class, demonstrates uniform IS on each class.

To sum up, the proposed TME 
% which is 
% based on the extracted WFA $\mathcal A$ 
provide
% can be regarded as 
a global explanation of the source RNN $\mathcal R$.
As discussed above, by 
% utilizing 
computing TME-based
influential score, we can 
% both find some 
provide explanations of $\mathcal R$ from both class-wise and word-wise perspectives.
%, which is independent of input sequences.

% \subsection{Relationship Comparison of Words}

\subsection{Contrastive Word Relation}
\label{relation}
Based on the Transition Matrix Embeddings (TME), 
we propose a contrastive method to 
% also 
investigate the relations of words in TME and conventional word embeddings, which
% and their embeddings and 
reveals two intriguing properties.

\begin{figure}[t]
    \centering
    \begin{tabular}{cc}
    \includegraphics[width=0.5\textwidth]{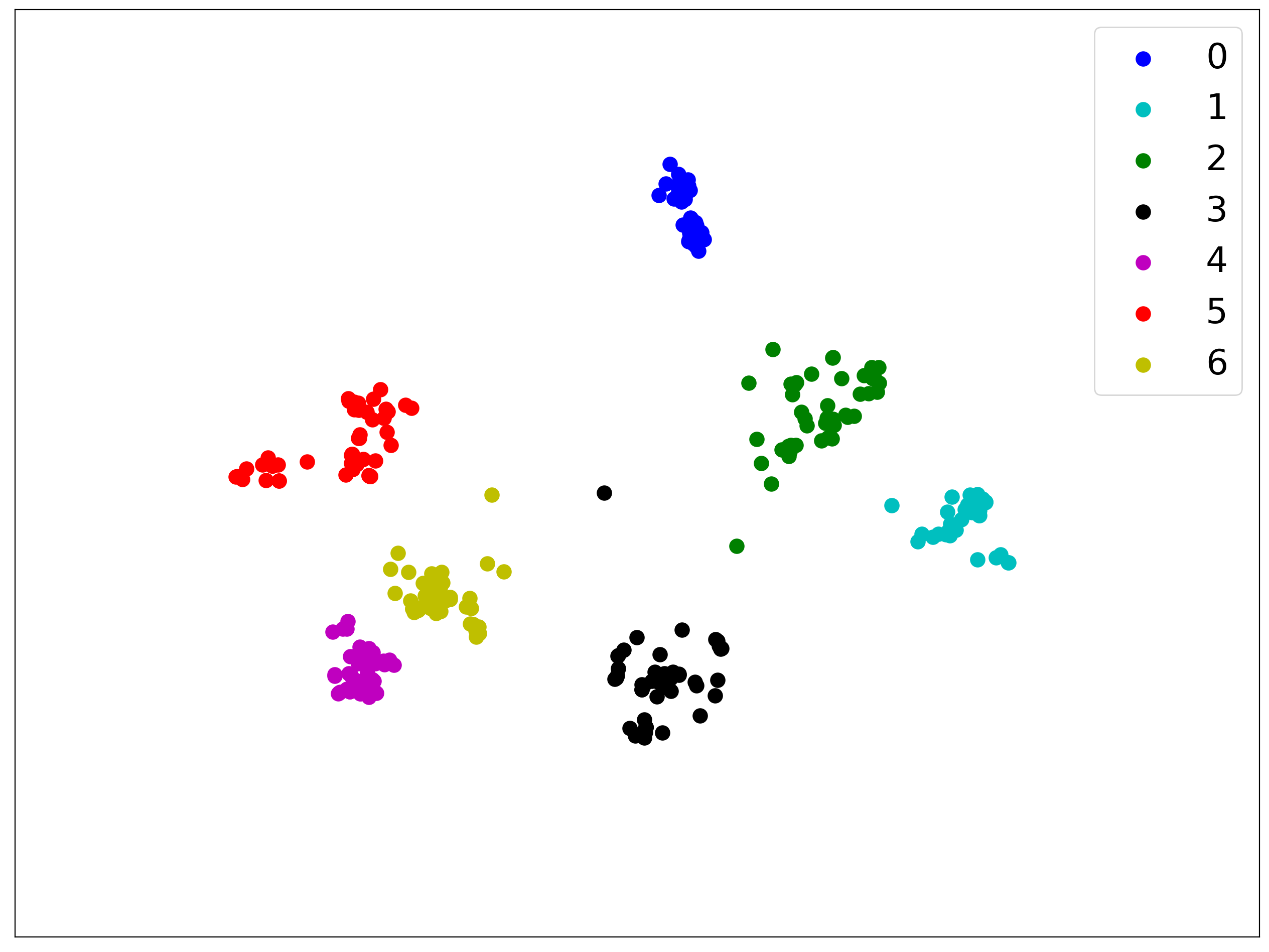}
         &      \includegraphics[width=0.5\textwidth]{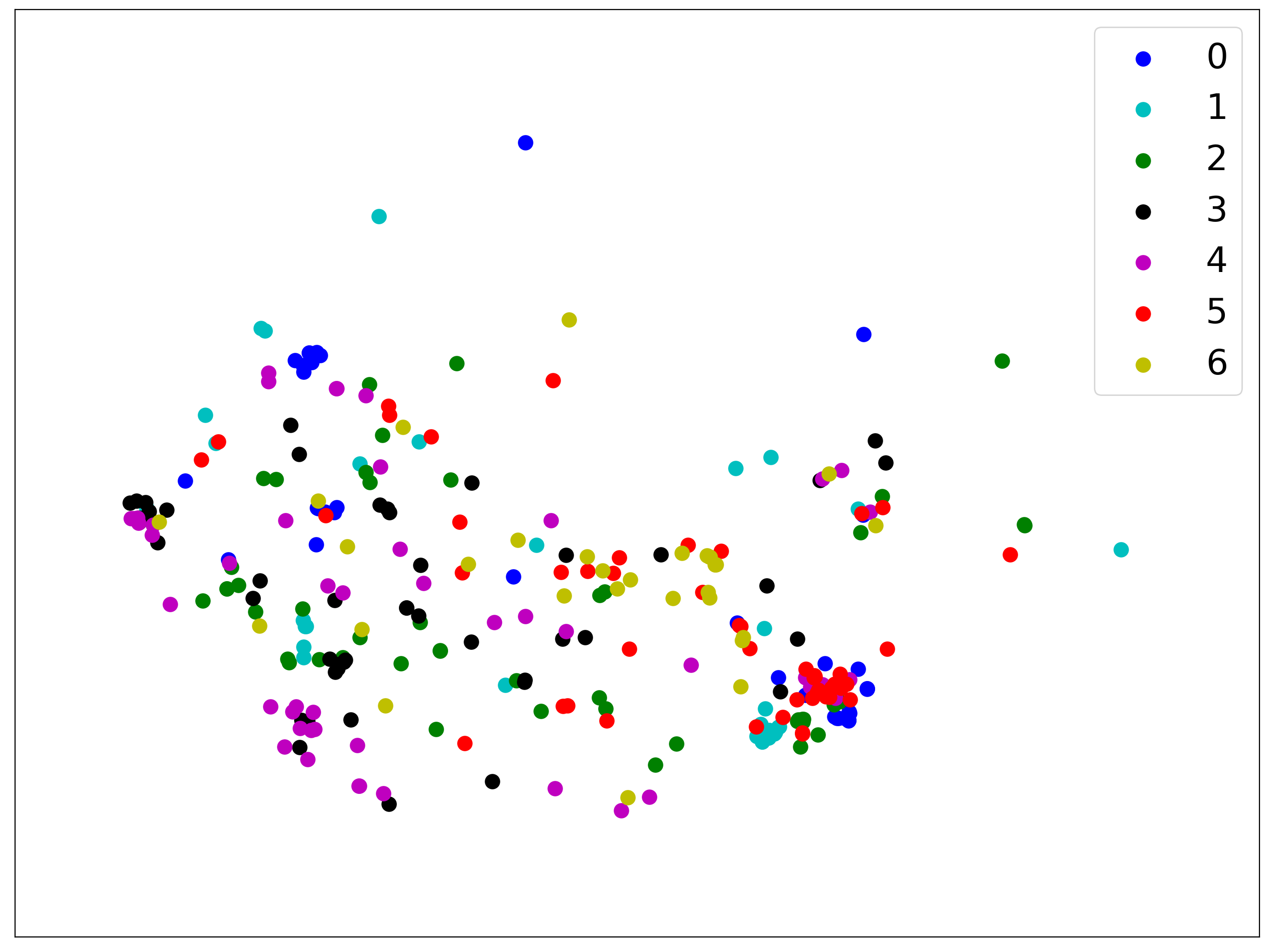}\\
        (a) TME & (b) Glove
    \end{tabular}
        \caption{t-SNE visualization of two kinds of word embeddings \alt{on QC dataset. Each color represents a class.}}
    \label{fig:tsne}
\end{figure}

First, to demonstrate the difference between TME and Glove embeddings, 
we use t-SNE~\cite{van2008visualizing} to visualize the embedding vectors of TME and Glove, which is shown in Figure~\ref{fig:tsne}.
Specifically, we select top-50 influential words from each class in QC news dataset with each color representing a class.
We can see that the selected 350 ($50 \times 7$) words demonstrate 
% distinct 
different
clustering property in these two word embeddings.
This shows our TME are quite different from 
pretrained word embeddings, wherein the word semantics are task-oriented.

%To start with, we define two distances.
We now 
% explore TME by 
characterize the contrastive relations between words in TME and the conventional word embeddings. \alt{We define $\| \cdot \|$ as the $p-$norm of a matrix or a vector divided by the number of its elements, and we set $p$ to 2. We compute the distance of two words given by TME and their conventional semantics, respectively.
For words $\sigma_1,\sigma_2$, we define their transition distance as $d_T(\sigma_1, \sigma_2) = \|\boldsymbol e _{\sigma_1} -\boldsymbol e _{\sigma_2}\|$.
In order to analyze their conventional semantic distance, we use the Glove~\cite{pennington2014glove} word embeddings $\boldsymbol g_{\sigma_1}$, $\boldsymbol g_{\sigma_2}$, and define the semantic distance as $\alt{d_S(\sigma_1, \sigma_2)=\|\boldsymbol g_{\sigma_1}-\boldsymbol g_{\sigma_2} \|}$.}
% Intriguingly, 
By analyzing these two embedding distances between words,
we find that there exist some contrastive word pairs demonstrating different properties.
We formally define two types of contrastive word pairs in the following.

% provide two definitions of some specific word pairs in the following.

\begin{definition}[$(\epsilon, \delta)$-Collaborative Pair] A $(\epsilon, \delta)$-Collaborative Pair is 
% two
a pair of words $(\sigma_1, \sigma_2)$ satisfying that
\begin{equation}
    d_T(\sigma_1, \sigma_2) \le \epsilon, \quad d_S(\sigma_1, \sigma_2) \ge \delta.
\end{equation}
\end{definition}

Here $\epsilon$ is a small positive number to guarantee
% make sure 
that the word pair $\sigma_1$ and $\sigma_2$
% share 
trigger similar transition behaviours of the source RNN $\mathcal R$.
% on them.
% However, 
On the other hand,
$\delta$ is a relatively larger positive number, indicating these two words have quite different semantics in terms of conventional embeddings.
Hence the \textit{collaborative pairs} are the word pairs that have distinct meanings, but are similar from the RNN $\mathcal R$'s perspective on the specific task.
In contrast, the \textit{adversarial pairs} are defined in a symmetry manner,
% in a different manner, 
which means the words share similar meanings, 
but are quite different from the RNN $\mathcal R$'s understanding in a particular task.

\begin{definition}[$(\epsilon, \delta)$-Adversarial Pair] 
% An $(\epsilon, \delta)$-Collaborative Pair is two words $(\sigma_1, \sigma_2)$ of 
An $(\epsilon, \delta)$-Adversarial Pair is a pair of words $(\sigma_1, \sigma_2)$ satisfying that
\begin{equation}
    \alt{d_T(\sigma_1, \sigma_2) \ge \delta, \quad d_S(\sigma_1, \sigma_2) \le \epsilon.}
\end{equation}
\end{definition}

% How do these concepts and results contribute to the explanation of the source RNN?
% In fact, those has given a further result on 
The above contrastive pairs allow us to understand
how RNN learns the semantics of the words. 
When a dataset and a task is given, the semantic of a word in the vocabulary is not learned fully obeying general word embedding, but \textit{task-oriented}. 
% That means the semantics RNNs learnt is under task and more targeted. 
% In this way, 
% RNNs do not need to learn semantics from natural language,
% but embedding in the linear space oriented by the task.
\begin{table}[t]
    \centering
   {
    \begin{tabular}[width=0.3\textwidth]{|c|c|c|c|}
    \hline
       Label & Category & Collaborative Pairs &  
       Adversarial Pairs \\ \hline
0&Sport  &(‘lakers’,‘wozniacki’)& (‘cup’,‘cups’) \\ \hline
1&World & (‘yemen’,‘gaddafi’)&(‘yemen’,‘usa’) \\ \hline
2&US  & (‘wildfire’,‘texas’)&(‘wildfire’, ‘tsunami’)\\ \hline
3&Business  & (‘wall’,‘mortage’) &(‘wall’,‘behind’)\\ \hline
4&Health &(‘therapy’,‘rice’) &(‘exercise’,‘sports’)\\ \hline
5&Entertainment &(‘cannes’,‘bieber’)&(‘diana’,‘williams’)\\ \hline
6&Sci\_tech &(‘climate’,‘software’)&(‘windows’,‘open’)\\ \hline

    \end{tabular}
    }
    \caption{Examples of 
    Collaborative Pairs and Adversarial Pairs.}
    \label{tab:pairs}
\end{table}
To make the task-oriented word semantics clearer, we show some examples of $(\epsilon,\delta)$-Collaborative Pairs and Adversarial Pairs found by our algorithm in Table \ref{tab:pairs}.
\alt{The $(\epsilon,\delta)$ is set to be $(0.012,0.1)$ for collaborative pairs, and $\alt{(0.2,0.01)}$ for adversarial pairs.
Note that the size of $(\epsilon,\delta)$ for adversarial pairs is significantly different from that for collaborative pairs. In collaborative pairs, $\epsilon$ is \xy{set to} a relatively small positive value, ensuring that the embedding distances in RNN are small, while for adversarial pairs, $\epsilon$ is set \xy{to a} larger value to avoid strict constraints on semantic distance that would make the resulting adversarial words too similar. The value of $\delta$ is also set for similar reasons.} From these examples, we identify
two intriguing properties of the source RNN.
The \textit{collaborative pairs} are the pair of words which the source RNN processes similarly with regard to the current task, but not synonyms in conventional semantics.
On the other hand, the \textit{adversarial pairs} are actually synonyms,
but when considered in the current task, the behaviours of RNNs are triggered 
% distinctly
differently. 
% on the pair. 
These contrastive pairs capture the RNN's specific understanding of word semantics, which are task-oriented.
% by the task. 
% These observations show the RNN's specific understanding of words, which is oriented by the task. 

Next, we analyze the adversarial pairs with a concrete example. 
Note that the collaborative pairs can be analyzed in a similar way.
Consider the adversarial pair (``exercise'', ``sports'') as an example, which are synonyms in general word semantics.
% But when we study their influence on the task, there would be a significant difference,
But when we analyze their influence on RNN's decisions,
% there are 
they demonstrate significant differences.
The influence analysis results show that
% since 
``exercise'' is a word that has high influence score on class ``Health'', 
while ``sports'' is a word that is most influential
% obviously correlated 
to the ``Sports'' category.
We further present an adversarial example generated by leveraging this adversarial pair. 
Consider the following sentence in the test set, 
% as input, 
``exercise helps her age swimmingly'', on which the RNN outputs ``Health'' with probability of 98.9\%. 
When we feed the sentence 
``sports helps her age swimmingly" to the RNN instead,
the output probability of category ``Sport''  raises up to 92.7\%.
However, the two sequences have nearly the same semantics.
% By all analysis above, as the synonyms and semantics are differently understood by RNNs and natural approaches, we can conclude that the semantics learnt by RNNs are task-oriented, which is different from the original meanings.
Based on the above result, we see that synonyms with regard to general embeddings are understood differently by RNNs.
Therefore, TME and TME-based explanation can help us better understanding what the target RNN learns and how it makes decisions.
% Further, we can conclude that the semantics learnt by RNNs are task-oriented, which is different from the original meanings.
%\xy{Consider whether it is spurious adversarial example, since WFA is just an abstraction, there may be inconsistency between WFA and RNN}

% This can help us better understanding what the target RNN learn and how it make decisions.
%This can partly explain how RNNs make decisions. 
% In brief, 
In this way,
by identifying and analyzing the collaborative examples, 
we can understand what are 
% classification-guided 
task-oriented
synonyms from the target RNN's perspective,
though they may be distinct in 
% natural
conventional embedding semantics.
On the other hand,
characterizing  adversarial pairs provides explanations of the target RNN on distinguishing similar words in the context of the current task.
% On the other hand,
% characterizing the adversarial pairs provides explanations on what the RNN distinguish the words.
We further validate the effectiveness of the contrastive pairs with the following two applications.
% through the 
% kinds of 
% pairs above with two applications in the following.

\subsubsection{TME for RNN Pretraining}
The identification of collaborative pairs reveals that 
TME is able to characterize
task-oriented semantics, compared with the conventional embedding method like Glove.
%We verify this with considering TME as pretrained word embedding for training a RNN of the same task.
% Therefore, since the collaborative pairs share closer distance in TME but are far away in Glove, 
% TME reveals what the RNN actually understand the words in the task.
We next show the effectiveness of TME in boosting RNN training when serving as pretrained embeddings.
% , which validates that TME stands more precise semantic of the classification task than general word embeddings.

\alt{In the experiment, we consider training RNNs on the QC news dataset and \alt{Toxic dataset} with three word embedding initialization strategies: (i) TME, (ii) Glove, and (iii) random initialization.
% The result is compared in 
Figure~\ref{fig:pretrain} shows the comparison results.
We can see that the initialization with TME outperforms Glove and random initialization on convergence speed in terms of loss and accuracy on the test set, which validates the effectiveness of TME in boosting RNN pretraining.

Note that there is a steep rise in accuracy observed during the training process.
In fact, for general NLP tasks, neural networks tend to experience a rapidly initial improvement in accuracy and then reach a plateau as training progresses. In our benchmarks tasks, the initialization of word embedding vectors has a significant impact on the network's ability to learn the correct patterns. It is only after the network learns the correct semantics from these embeddings, then the neural network can enter the phase of improvement in accuracy. As our pre-trained word embeddings are initialised with clearer semantics, the network is able to reach this phase of improvement at an earlier stage in the training process.}

\begin{figure}[t]
    \centering
    \begin{tabular}{cc}
        \includegraphics[width=0.45\textwidth]{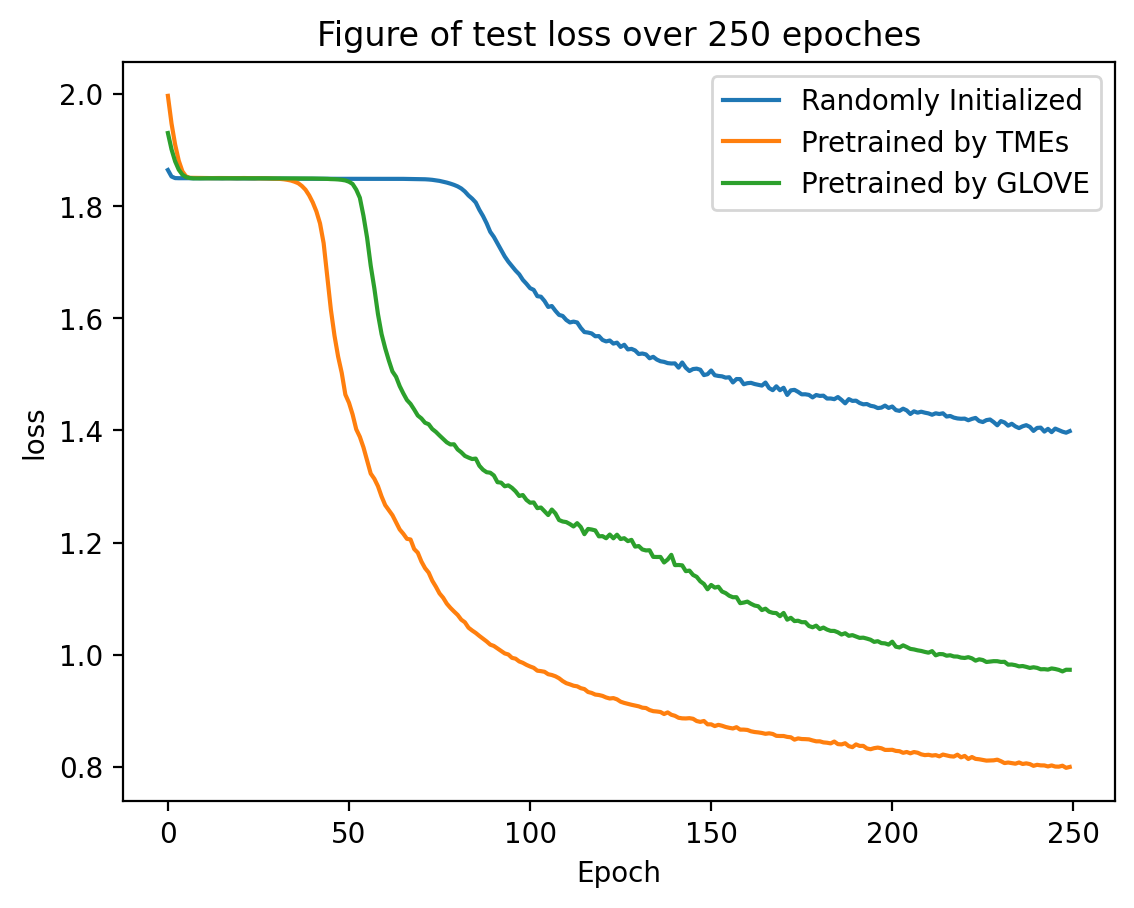} &        \includegraphics[width=0.45\textwidth]{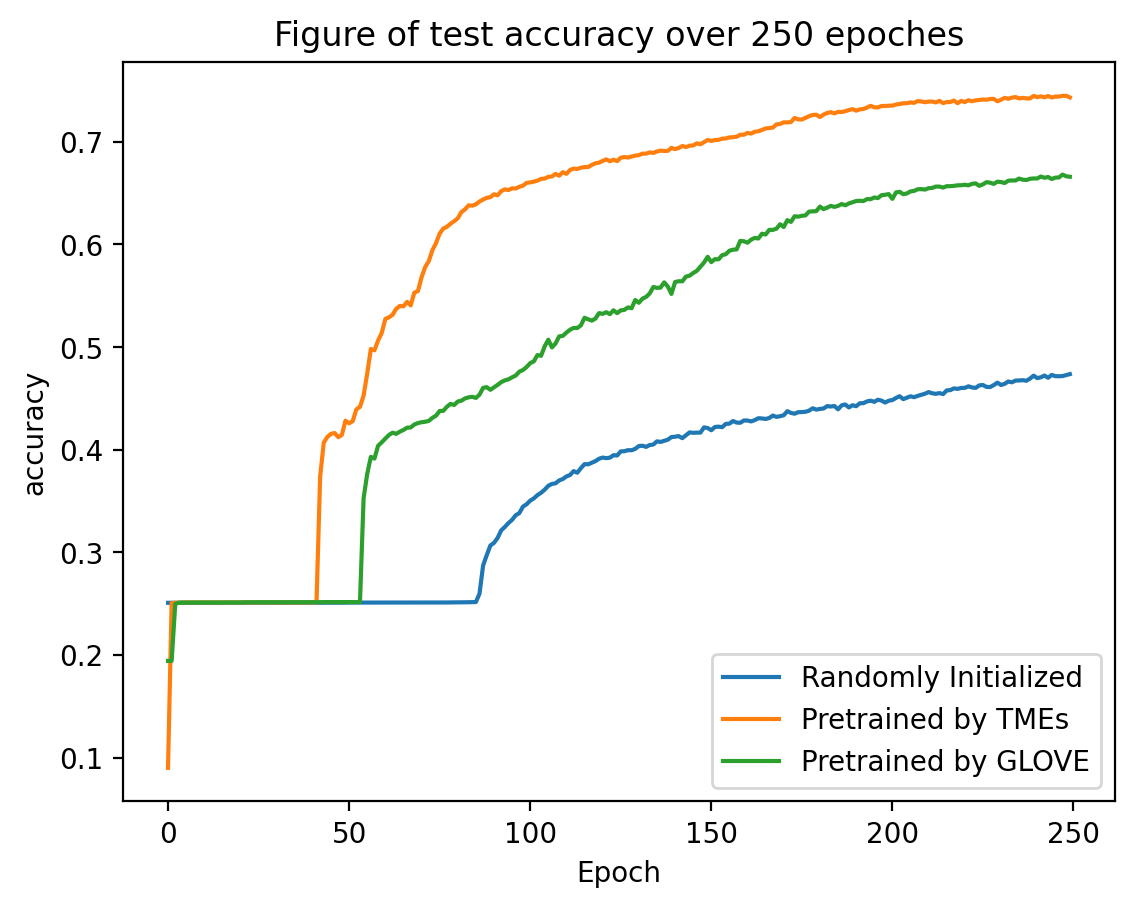}   \\
        (a) Loss on test set, QC & (b) Accuracy on test set, QC \\
        \includegraphics[width=0.45\textwidth]{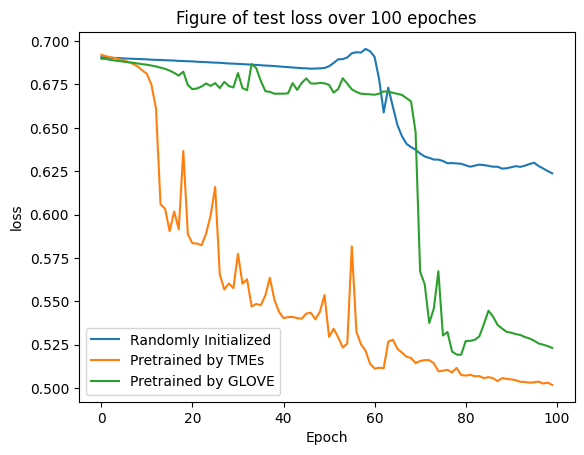} & \includegraphics[width=0.45\textwidth]{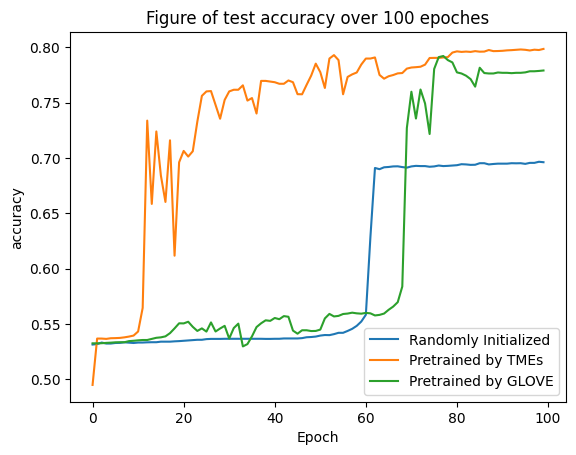} \\
        \alt{(c) Loss on test set, Toxic} & \alt{(d) Accuracy on test set, Toxic}
    \end{tabular}
        \caption{Comparison of three initialization strategies for RNN training.}
    \label{fig:pretrain}
\end{figure}

\subsubsection{TME for Adversarial Example Generation}
% We demonstrate the effectiveness of using TME in generating adversarial examples (AEs) for the source RNN, 
% which validates the ability of TME to accurately explain the decision logic and identify vulnerabilities of RNNs.
% The identification of adversarial pairs shows that TME are able to explain the decision logic and 
Previous investigation has shown that TME can be utilized to identify adversarial pairs and decision vulnerabilities of RNNs.
% based on TME can be utilized to generate adversarial examples for RNNs.
Inspired by the investigation results,
% these approache, 
we apply TME to generate adversarial examples for the source RNN.
We perform comparison experiments of using TME and Glove as embeddings in crafting adversarial examples.
% for the source RNN.
To evaluate the effectiveness of different methods in adversarial example generation, we use \textit{Attack Success Rate (ASR)} as the evaluation metric, namely
% that meaning a higher 
the proportion of crafted sequences that successfully mislead the RNN to produce false outputs.
% We show that TME outperform Glove with a higher Attack Success Rate (ASR), meaning a higher proportion of crafted sequences successfully mislead the RNN to produce false outputs.

To generate adversarial examples, we select the top-$k$ influential words in each sentence from the test set, measured by $\ell_2$-norm of IS,  and replace them with their synonyms with regard to different embedding methods.
% We set the selected synonym $\sigma'$ to have a TME semantic distance to the original word $\sigma$ with a certain lower bound, 
% that is $d_T(\sigma, \sigma')\ge 0.01$, to ensure the pair to be adversarial.
\alt{To ensure the generation of adversarial pair, we set a lower bound for the TME semantic distance between the original word $\sigma$ and the selected synonym $\sigma'$, that is, $d_T(\sigma, \sigma')\ge 0.01$.
For Glove, we simply replace the top-$k$ influential words with their synonyms.
% of the target words in 
% within the $d_S$-bound.
For TME, we leverage the Adversarial Pair under two settings with
% $\alt{(0.15, 0.01)}$, $\alt{(0.18, 0.01)}$\}
$d_S(\sigma, \sigma')\le 0.15$, $d_S(\sigma,\sigma') \le 0.18$, respectively,
to generate adversarial examples. Here, when $\epsilon$ is set to 0.15, the constraint on semantic distance for natural language is relatively strict. The resulting adversarial samples are semantically clear and have minor changes compared to the original sentences. We denote this kind of adversarial pair as \textit{Weak Adversarial Pairs}. 
%\xyc{Read carefully of this paragraph. And see whether there are errors in the presentation and smooth for better readability. Maybe split into two paragraphs.} 
When $\epsilon$ is set to 0.18, however, the constraint on semantic distance becomes more relaxed. \xy{The resulting sentences have more different semantics and there might be some local grammar issues, but still can be classified into the same label as the original ones.} 
We refer to this type of adversarial pair as \textit{Strong Adversarial Pairs}. 
For Toxic, we set $\epsilon$ to 0.12 and 0.2 for Weak Adversarial Pairs and Strong Adversarial Pairs, respectively.}
The comparison results are shown in Table~\ref{tab:asr}.
We can see that using adversarial pairs guided by TME achieves
% stands significantly 
higher ASR than using Glove. \alt{For example, on QC news dataset, we've gained an average increase of 
21\% for Weak Adversarial Pairs and
32\% for Strong Adversarial Pairs.}
The evaluation results validate the effectiveness of TME in capturing the  decision logic and vulnerability of the target RNN.

% \zmw{Todo: Merge the redundant part of Sec 7.1 and 7.2}
%\begin{table}[t]
    %\centering
    %\begin{tabular}{|c|c|c|}
     %\hline
      % Embeddings &  $k=1$ & $k=2$ \\
        %  \hline
      %  Glove &  0.17 & 0.22 \\
       %  \hline
     %   $(\alt{0.15, 0.01})$-Adversarial Pairs & 0.34 & 0.46 \\
    %    $(\alt{0.18,0.01})$-Adversarial Pairs & 0.44 & 0.59 \\
   %      \hline
  %  \end{tabular}
 %   \caption{Comparison results of ASR with different embedding methods.}
  %  \label{tab:asr}
%\end{table}

    \begin{table}
        \centering
        {
        \begin{tabular}{|c|c|c|c|c|}
            \hline
            \multirow{2}{*}{\alt{Embeddings}}& \multicolumn{2}{c|}{\alt{QC}} & \multicolumn{2}{c|}{\alt{Toxic}}\\
            \cline{2-5}
            &  \alt{$k=1$} & \alt{$k=2$}  & \alt{$k=1$} & \alt{$k=2$} \\
              \hline
            \multirow{1}{*}{Glove}  
            & \alt{0.17} & \alt{0.22} 
            & \alt{0.06} & \alt{0.11} \\
            \hline
            \multirow{1}{*}{\alt{Weak Adversarial Pairs}} 
            & \alt{0.34} & \alt{0.46} 
            & \alt{0.11} & \alt{0.23} \\
            \hline
            \multirow{1}{*}{\alt{Strong Adversarial Pairs}}  
            & \alt{\textbf{0.44}} & \alt{\textbf{0.59}} 
            & \alt{\textbf{0.15}} & \alt{\textbf{0.25}}\\
            \hline
            \end{tabular}}
            \caption{Comparison results of \alt{Attack Success Rate (ASR)} with different embedding methods. }
            \label{tab:asr}
    \end{table}

\subsection{\alt{Discussion}}

\paragraph{Computational Complexity}
    The time complexity of the whole workflow is analyzed as follows.
    Suppose that the set of training samples $\mathcal D_0$ 
    % has
    has
    $N$ words in total
    and its alphabet $\Sigma$ contains $n$ words, and is augmented as $\mathcal D$ with $t$ epochs (i.e. each sentence in $\mathcal{D}_0$ is transformed to $t$ new sentences in $\mathcal{D}$),
    hence $|\mathcal{D}|=(t+1)N$.
    Assume that a probabilistic output of RNNs is a $m$-dim vector,
    and the abstract states set $\hat S$ contains $k$ states.
    
    To start with, 
    the augmentation of $\mathcal D_0$ and tracking of probabilistic outputs in $\mathcal{D}$ will
    be completed in $\mathcal O(|\mathcal D|)=\mathcal O(t\cdot N)$ time.
    % Next, 
    Besides, the time complexity of k-means clustering algorithm is $\mathcal O(k\cdot |\mathcal D|)=\mathcal O(k\cdot t\cdot N)$.
    The count of abstract transitions will be done in $\mathcal O(n)$.
    As for the processing of transition matrices, we need
    to
    calculate the transition 
    % weight 
    probability for each 
    word $\sigma$ with each 
    % original 
    source state $\hat s_i$ and 
    % new 
    destination state $\hat s_j$,
    which costs $\mathcal O(k^2\cdot n)$ time. Finally, the context-aware enhancement on transition matrices takes $\mathcal{O}(k\cdot n)$ time.

    Note that $\mathcal O(n)=\mathcal O(N)$, hence we can conclude that the time complexity of our whole workflow
    is $\mathcal O(k^2\cdot t\cdot N)$.
    So the time complexity of our approaches only takes linear time w.r.t. the 
    % scale of 
    size of the dataset,
    which provides theoretical 
    % assurance 
    extraction overhead for
    large-scale data applications.

    \alt{
For the explanation analysis, the IS score can be computed in constant time, as this process simply involves multiplying two matrices. Assuming the vocabulary contains a total of $n$ words, and a sequence $s$ with $k$ words, conducting an adversarial attack on this sequence requires $\mathcal O(k + m\log k)$ time to find the top-$m$ influential words, and it costs $\mathcal{O}(n)$ time to find an optimal adversarial pair in the entire vocabulary through enumeration. Thus, if $m$ words need to be replaced, the time complexity for the entire process is $O(k + m\log k + nm)$.
}

\paragraph{\alt{Applicability to other RNNs}}

\alt{Although the proposed framework for WFA extraction and explanation of RNNs is customized for natural language tasks, we point out that some of its components can be generalized to other types of RNNs as well.}

\xy{
First, 
% the extraction algorithm in the proposed framework identifies 
the identified transition sparsity and context-awareness problems in WFA extraction for natural language tasks
% These issues 
may also occur in RNNs used in other domains,
thus the proposed methods to address these problems are applicable to them as well.
Thus, the empirical method to complement the missing rules in the transition diagram and the adjustment of transition matrices to enhance the context-awareness of the WFA can also be applied to other types of RNNs.
However, the data augmentation tactics proposed in the paper may need to be adapted to suit the specific characteristics of other types of RNNs. 
Specifically, we can perform data augmentation on natural language samples as long as the synthetic sentences make sense. However, other datasets, such as formal languages, do not possess this property.}

\alt{As for the explanation analysis, 
% we would like to emphasize that 
the application of our method for RNN explanation is not limited to natural language tasks. 
As long as a WFA can be extracted from the target RNN, the method for explanation is applicable. 
In fact, our study on RNN explainability only involves the extraction of vector representation of words % \xyc{word2vec is a classic algorithm in NLP; change into another expression, e.g., vector representation of words} 
through the transition matrices of the WFA,
%enabling the embedding of samples into linear space.
thus this \xy{component of our framework} is highly generalizable and can be applied to various other domains beyond natural language processing.} 
%With this in mind, it is promising to see the potential of this framework in bridging the gap between deep learning and interpretable machine learning.
\section{Related Work} 
\label{sec:related}
Many research efforts have been made to understand the behaviours of RNNs with an extracted model.
We discuss the extraction methods and their applications respectively in the following.

\subsection{Model Extraction of RNNs}
As Jacobsson reviewed in~\cite{jacobsson2005},
the extraction approach of RNNs can be divided into two categories: pedagogical approaches and compositional approaches.

\paragraph*{Pedagogical Approaches}
% Much progress has been achieved 
Many research works consider using pedagogical approaches 
to abstract RNNs by leveraging explicit learning algorithms,
such as the $L^*$ algorithm~\cite{angluin}.
% in 1987.
Earlier works date back to two decades ago, when Omlin et al. attempted to extract
a finite model for Boolean-output RNNs~\cite{omlin1992,omlin1996,omlin1996b}.
% \xy{Is this ``succedaneum'' a proper concept?}
% \zm{maybe ``model`` is enough}
Recently, Weiss et al.~\cite{weiss2018} proposed an approach to extracting DFA from RNN-acceptors based on $L^*$ algorithm.
Later, they 
presented a weighted extension of $L^*$ algorithm that extracted probabilistic determininstic finite automata (PDFA)
from RNNs~\cite{weiss2019}.
Okudono et al.~\cite{ok2020} proposed a weighted extension of $L^*$ algorithm to extract WFA for real-value-output RNNs.
Overall, pedagogical approaches have achieved great success in abstracting RNNs for small-scale languages,
particularly formal languages. 
% On the other hand, 
% these approaches are limited by the scale of the language model,
Such exact learning approaches have intrinsic limitation in 
% are limited in 
scalability w.r.t. the language complexity,
% hence they
thus are not suitable for automata extraction for natural language models.

\paragraph*{Compositional Approach}
Another technical line for automata extraction from RNNs
% way 
is the compositional approach, 
% specifically 
which leverages
% using 
unsupervised algorithms (e.g. k-means, GMM) to cluster state vectors as abstract states~\cite{zeng,cechin}.
Wang et al.~\cite{wang2018} studied the key factors in the compositional approach that
influence the reliability of the extraction process, and proposed an empirical rule to extract DFA from RNNs.
Later, Zhang et al.~\cite{zhang2021} followed the state encoding of compositional approach and proposed a WFA extraction approach
from RNNs,
which can be applied to both grammatical languages and natural languages.
In this paper, the proposed WFA extraction approach
from RNNs also falls into the line of compositional approach,
but aims at proposing transition rule extraction method to address the transition sparsity problem 
and enhance the context-aware ability, which is customized for natural language tasks.

% \subsection{Analysis and Explanation of RNNs}
\subsection{Model-based RNN Analysis  and Explanation}
% Another line of 
There are a series of  works focusing on 
% using 
deriving the extracted models for
further applications, where the abstract models are more amenable to analysis and explanation.
% % to analyze and explain the target RNNs.
% We divide them into two categories: analysis of RNNs and explanation of RNNs.
%In the past few years, much progress has been made for RNNs in both theory and implementation. Along with the growth in model and algorithmic complexity, it also poses more challenges to the interpretability problem of RNN models. To this end, many research have emphasized on the interpretability of RNN models~\cite{hou2020learning, freitas2014comprehensible}. We introduce some significant research that has made great effort on this field, which are similar to the research direction and interest of our article. Those research are done in several approches, which can be divided into two categories: modeling approaches and algorithm approaches.

%\paragraph*{Modeling Approaches} 
\paragraph*{Model-based Analysis}
Model extraction techniques have been widely used to aid the analysis of RNNs,
% This thread of works focus on analysis the target RNNs based on the extracted surrogate models.
since the extracted models can be regarded as an approximation of the target RNNs, on which are easier to operate and perform analysis.
% we can analysis their properties
% with the aid of the surrogate model.
% \textit{Deepsteller}~
\cite{du2019} is a representative work for model-based RNN analysis, which leverages the extracted model
% provides various criteria 
to detect adversarial examples and increase test coverage of the target RNNs.
% based on the extracted model.
Later, 
% Du et al. proposed \textit{Marble}~
\cite{du2020} proposed a 
% , which is a 
model-based approach for robustness analysis of RNNs. 
% stateful deep learning systems.
% Besides, 
Xie et al.~\cite{xie2021} proposed to leverage the extracted model to identify buggy behaviors and further for automatic repairment of 
% the incorrect behaviours of 
RNNs.
% Automatically.
 % proposed \textit{RNNRepair}~
In this paper, based on the extracted WFA, 
we proposed a new embedding method TME, which provides a new insight on RNN analysis for natural language tasks.
With the proposed contrastive pairs derived by TME,
% and general word embeddings,
we can analyze task-oriented semantics of the target RNNs,
% learns from the task.
% Our TME can also 
which further can be applied to
boost pretraining and adversarial example generation for RNNs.

% verifying, analyzing and repairing RNNs
% by leveraging these techniques. 

% \paragraph*{Explanation Algorithms of RNNs}
\paragraph*{Model-based Explanation}
There are also several works devoted to explaining the mechanism of RNNs with the aid of surrogate models. 
Krakovna et al.~\cite{krakovna2016increasing} presented an interpretation method for RNNs by using hidden markov models (HMMs) to simulate the source RNNs.
Hou et al.~\cite{hou2020learning} proposed an approach to interpreting the effect of gates  on the mechanism of RNNs by using the extracted finite state automata.
Jiang et al.~\cite{jiang2020cold} proposed a hybrid model \textit{FA-RNNs}, which is trainable, generalizable as well as interpretable. 
There are also works operating directly on the structure of RNNs. 
Guo et al.~\cite{guo2018interpretable} proposed an interpretable LSTM neural network equipped with tensorized hidden states, which could learn variable-specific representations. 
% \xy{what is variable specific?}
%Motivated by the SISTA algorithm~\cite{gregor2010learning}, Wisdom et al.~\cite{wisdom2016interpretable} proposed \textit{SISTA-RNNs} as a form of interpretable recurrent neural networks based on a specific probabilistic model. 
% \xy{What is this specific prob model?}
In this work, by leveraging the extracted WFA, we proposed a global explanation method, which computes the  word-wise influence score on RNN decisions,
and a contrastive explanation method, where the identified collaborative and adversarial repairs effectively characterize the task-oriented semantics learned by the target RNN.

\section{Conclusion}
\label{sec:conclusion}
In this paper, we propose a general framework for weighted automata extraction and explanation of RNNs for natural language tasks.
We introduce
a novel approach to  extracting transition rules of
weighted finite automata from recurrent neural networks.
In particular, we address the transition sparsity problem and complement the transition rules of missing rows, which effectively improves the extraction precision.
In addition, we present an {heuristic}
method to enhance the context-aware ability of the extracted WFA.
We further propose two augmentation tactics to track more transition behaviours of RNNs.
Both theoretical analysis and experimental results demonstrate
the efficiency and precision of our rule extraction approach
for natural language tasks.
Based on the extracted model, we propose a word embedding method, Transition Matrix Embeddings (TME),
 to construct task-oriented explanations of the target RNN, including 
a word-wise global explanation method of RNNs, and a contrastive method to interpret the word semantics that the RNNs learned from the task.
\subsection*{Acknowledgements}
This work was sponsored by the National Natural Science Foundation of China under Grant No. 62172019, 61772038,
and CCF-Huawei Formal Verification Innovation Research Plan; partially funded by the ERC under the European Union’s Horizon 2020 research and innovation programme (FUN2MODEL, grant agreement No.~834115).
\bibliographystyle{elsarticle-num}
\bibliography{reference.bib}

\begin{thebibliography}{10}
\expandafter\ifx\csname url\endcsname\relax
  \def\url#1{\texttt{#1}}\fi
\expandafter\ifx\csname urlprefix\endcsname\relax\def\urlprefix{URL }\fi
\expandafter\ifx\csname href\endcsname\relax
  \def\href#1#2{#2} \def\path#1{#1}\fi

\bibitem{He_2016_CVPR}
K.~He, X.~Zhang, S.~Ren, J.~Sun, Deep residual learning for image recognition,
  in: Proceedings of the IEEE Conference on Computer Vision and Pattern
  Recognition (CVPR), 2016, pp. 770--778.

\bibitem{abdel2014convolutional}
O.~Abdel-Hamid, A.-r. Mohamed, H.~Jiang, L.~Deng, G.~Penn, D.~Yu, Convolutional
  neural networks for speech recognition, IEEE/ACM Transactions on audio,
  speech, and language processing 22~(10) (2014) 1533--1545.

\bibitem{goldberg2017neural}
Y.~Goldberg, Neural network methods for natural language processing, Synthesis
  lectures on human language technologies 10~(1) (2017) 1--309.

\bibitem{che2018recurrent}
Z.~Che, S.~Purushotham, K.~Cho, D.~Sontag, Y.~Liu, Recurrent neural networks
  for multivariate time series with missing values, Scientific reports 8~(1)
  (2018) 1--12.

\bibitem{wang2019convolutional}
R.~Wang, Z.~Li, J.~Cao, T.~Chen, L.~Wang, Convolutional recurrent neural
  networks for text classification, in: 2019 International Joint Conference on
  Neural Networks (IJCNN), IEEE, 2019, pp. 1--6.

\bibitem{datta2020neural}
D.~Datta, P.~E. David, D.~Mittal, A.~Jain, Neural machine translation using
  recurrent neural network, International Journal of Engineering and Advanced
  Technology 9~(4) (2020) 1395--1400.

\bibitem{angluin}
D.~Angluin, Learning regular sets from queries and counterexamples, Information
  and computation 75~(2) (1987) 87--106.

\bibitem{weiss2018}
G.~Weiss, Y.~Goldberg, E.~Yahav, Extracting automata from recurrent neural
  networks using queries and counterexamples, in: International Conference on
  Machine Learning, PMLR, 2018, pp. 5247--5256.

\bibitem{weiss2019}
G.~Weiss, Y.~Goldberg, E.~Yahav, Learning deterministic weighted automata with
  queries and counterexamples, in: Advances in Neural Information Processing
  Systems, Vol.~32, Curran Associates, Inc., 2019, pp. 8558--8569.

\bibitem{ok2020}
T.~Okudono, M.~Waga, T.~Sekiyama, I.~Hasuo, Weighted automata extraction from
  recurrent neural networks via regression on state spaces, in: Proceedings of
  the AAAI Conference on Artificial Intelligence, AAAI Press, 2020, pp.
  5306--5314.

\bibitem{wang2018}
Q.~Wang, K.~Zhang, A.~G. Ororbia~II, X.~Xing, X.~Liu, C.~L. Giles, An empirical
  evaluation of rule extraction from recurrent neural networks, Neural
  computation 30~(9) (2018) 2568--2591.

\bibitem{wang2018verification}
Q.~Wang, K.~Zhang, X.~Liu, C.~L. Giles, Verification of recurrent neural
  networks through rule extraction, arXiv preprint arXiv:1811.06029.

\bibitem{du2019}
X.~Du, X.~Xie, Y.~Li, L.~Ma, Y.~Liu, J.~Zhao, Deepstellar: Model-based
  quantitative analysis of stateful deep learning systems, in: Proceedings of
  the 2019 27th ACM Joint Meeting on European Software Engineering Conference
  and Symposium on the Foundations of Software Engineering, 2019, pp. 477--487.

\bibitem{dong2020}
G.~Dong, J.~Wang, J.~Sun, Y.~Zhang, X.~Wang, T.~Dai, J.~S. Dong, X.~Wang,
  Towards interpreting recurrent neural networks through probabilistic
  abstraction, in: 2020 35th IEEE/ACM International Conference on Automated
  Software Engineering (ASE), IEEE, 2020, pp. 499--510.

\bibitem{du2020}
X.~Du, Y.~Li, X.~Xie, L.~Ma, Y.~Liu, J.~Zhao, Marble: Model-based robustness
  analysis of stateful deep learning systems, in: Proceedings of the 35th
  IEEE/ACM International Conference on Automated Software Engineering, 2020,
  pp. 423--435.

\bibitem{xie2021}
X.~Xie, W.~Guo, L.~Ma, W.~Le, J.~Wang, L.~Zhou, Y.~Liu, X.~Xing, Rnnrepair:
  Automatic rnn repair via model-based analysis, in: International Conference
  on Machine Learning, PMLR, 2021, pp. 11383--11392.

\bibitem{lstm}
S.~Hochreiter, J.~Schmidhuber, Long short-term memory, Neural computation 9~(8)
  (1997) 1735--1780.

\bibitem{pennington2014glove}
J.~Pennington, R.~Socher, C.~D. Manning, Glove: Global vectors for word
  representation, in: Proceedings of the 2014 conference on empirical methods
  in natural language processing (EMNLP), 2014, pp. 1532--1543.

\bibitem{powers1998applications}
D.~M.~W. Powers, Applications and explanations of zipf's law, in: New methods
  in language processing and computational natural language learning, 1998, pp.
  151--160.

\bibitem{qc}
X.~Li, D.~Roth, Learning question classifiers, in: COLING 2002: The 19th
  International Conference on Computational Linguistics, 2002.

\bibitem{zhang2021}
X.~Zhang, X.~Du, X.~Xie, L.~Ma, Y.~Liu, M.~Sun, Decision-guided weighted
  automata extraction from recurrent neural networks, in: Thirty-Fifth AAAI
  Conference on Artificial Intelligence (AAAI), AAAI Press, 2021, pp.
  11699--11707.

\bibitem{toxic}
Jigsaw, Toxic comment classification challenge,
  \url{https://www.kaggle.com/c/jigsaw-toxic-comment-classification-challenge}
  Accessed April 16, 2022.

\bibitem{menendez1997jensen}
M.~Men{\'e}ndez, J.~Pardo, L.~Pardo, M.~Pardo, The jensen-shannon divergence,
  Journal of the Franklin Institute 334~(2) (1997) 307--318.

\bibitem{mikolov2013efficient}
T.~Mikolov, K.~Chen, G.~Corrado, J.~Dean, Efficient estimation of word
  representations in vector space, arXiv preprint arXiv:1301.3781.

\bibitem{van2008visualizing}
L.~Van~der Maaten, G.~Hinton, Visualizing data using t-sne., Journal of machine
  learning research 9~(11).

\bibitem{jacobsson2005}
H.~Jacobsson, Rule extraction from recurrent neural networks: Ataxonomy and
  review, Neural Computation 17~(6) (2005) 1223--1263.

\bibitem{omlin1992}
C.~Omlin, C.~Giles, C.~Miller, Heuristics for the extraction of rules from
  discrete-time recurrent neural networks, in: [Proceedings 1992] IJCNN
  International Joint Conference on Neural Networks, Vol.~1, IEEE, 1992, pp.
  33--38.

\bibitem{omlin1996}
C.~W. Omlin, C.~L. Giles, Extraction of rules from discrete-time recurrent
  neural networks, Neural networks 9~(1) (1996) 41--52.

\bibitem{omlin1996b}
C.~W. Omlin, C.~L. Giles, Rule revision with recurrent neural networks, IEEE
  Transactions on Knowledge and Data Engineering 8~(1) (1996) 183--188.

\bibitem{zeng}
Z.~Zeng, R.~M. Goodman, P.~Smyth, Learning finite state machines with
  self-clustering recurrent networks, Neural Computation 5~(6) (1993) 976--990.

\bibitem{cechin}
A.~L. Cechin, D.~Regina, P.~Simon, K.~Stertz, State automata extraction from
  recurrent neural nets using k-means and fuzzy clustering, in: 23rd
  International Conference of the Chilean Computer Science Society, 2003. SCCC
  2003. Proceedings., IEEE, 2003, pp. 73--78.

\bibitem{krakovna2016increasing}
V.~Krakovna, F.~Doshi-Velez, Increasing the interpretability of recurrent
  neural networks using hidden markov models, arXiv preprint arXiv:1606.05320.

\bibitem{hou2020learning}
B.-J. Hou, Z.-H. Zhou, Learning with interpretable structure from gated rnn,
  IEEE transactions on neural networks and learning systems 31~(7) (2020)
  2267--2279.

\bibitem{jiang2020cold}
C.~Jiang, Y.~Zhao, S.~Chu, L.~Shen, K.~Tu, Cold-start and interpretability:
  Turning regular expressions into trainable recurrent neural networks, in:
  Proceedings of the 2020 Conference on Empirical Methods in Natural Language
  Processing (EMNLP), 2020, pp. 3193--3207.

\bibitem{guo2018interpretable}
T.~Guo, T.~Lin, Y.~Lu, An interpretable lstm neural network for autoregressive
  exogenous model, arXiv preprint arXiv:1804.05251.

\end{thebibliography}
\end{document}